\def\eqref#1{equation~\ref{#1}}
\def\1{\bm{1}}
\def\vf{{\bm{f}}}
\def\vg{{\bm{g}}}
\def\vi{{\bm{i}}}
\def\vj{{\bm{j}}}
\def\vy{{\bm{y}}}
\def\mA{{\bm{A}}}
\def\mB{{\bm{B}}}
\def\mC{{\bm{C}}}
\def\mD{{\bm{D}}}
\def\mG{{\bm{G}}}
\def\mH{{\bm{H}}}
\def\mI{{\bm{I}}}
\def\mL{{\bm{L}}}
\def\mO{{\bm{O}}}
\def\mU{{\bm{U}}}
\def\mW{{\bm{W}}}
\def\mX{{\bm{X}}}
\def\mY{{\bm{Y}}}
\def\mZ{{\bm{Z}}}
\def\mLambda{{\bm{\Lambda}}}
\DeclareMathAlphabet{\mathsfit}{\encodingdefault}{\sfdefault}{m}{sl}
\SetMathAlphabet{\mathsfit}{bold}{\encodingdefault}{\sfdefault}{bx}{n}
\def\sA{{\mathbb{A}}}
\def\sB{{\mathbb{B}}}
\def\sH{{\mathbb{H}}}
\def\sI{{\mathbb{I}}}
\def\sO{{\mathbb{O}}}
\def\sS{{\mathbb{S}}}
\def\sT{{\mathbb{T}}}
\def\sU{{\mathbb{U}}}
\def\sV{{\mathbb{V}}}
\def\sW{{\mathbb{W}}}
\def\sX{{\mathbb{X}}}
\theoremstyle{thmstyleone}%
\newtheorem{lemma}{Lemma}
\theoremstyle{thmstyletwo}%
\theoremstyle{thmstylethree}%
\newtheorem{definition}{Definition}%
\begin{document}

\title[Hy, Khang and Kondor]{Learning to Solve Multiresolution Matrix Factorization by Manifold Optimization and Evolutionary Metaheuristics} 


\author*[1]{\fnm{Truong Son} \sur{Hy}}\email{TruongSon.Hy@indstate.edu}

\author[1]{\fnm{Thieu} \sur{Khang}}\email{thieukhang.ng@gmail.com}

\author[2]{\fnm{Risi} \sur{Kondor}}\email{risi@uchicago.edu}


\affil[1]{\orgdiv{Department of Mathematics and Computer Science}, \orgname{Indiana State University}, \orgaddress{\street{200 N. 7th St.}, \city{Terre Haute}, \postcode{47809}, \state{IN}, \country{United States}}}

\affil[2]{\orgdiv{Department of Computer Science}, \orgname{University of Chicago}, \orgaddress{\street{5730 South Ellis Ave.}, \city{Chicago}, \postcode{60637}, \state{Illinois}, \country{United States}}}




\abstract{Multiresolution Matrix Factorization (MMF) is unusual amongst fast matrix factorization algorithms in that it does not make a low rank assumption. This makes MMF especially well suited to modeling certain types of graphs with complex multiscale or hierarchical strucutre. While MMF promises to yields a useful wavelet basis, finding the factorization itself is hard, and existing greedy methods tend to be brittle. In this paper, we propose a ``learnable'' version of MMF that carfully optimizes the factorization using metaheuristics, specifically evolutionary algorithms and directed evolution, along with Stiefel manifold optimization through backpropagating errors. We show that the resulting wavelet basis far outperforms prior MMF algorithms and gives comparable performance on standard learning tasks on graphs. Furthermore, we construct the wavelet neural networks (WNNs) learning graphs on the spectral domain with the wavelet basis produced by our MMF learning algorithm. Our wavelet networks are competitive against other state-of-the-art methods in molecular graphs classification and node classification on citation graphs. We release our implementation at \url{https://github.com/HySonLab/LearnMMF}.
}

\keywords{Multiresolution analysis, multiresolution matrix factorization, manifold optimization, evolutionary algorithm, directed evolution, graph neural networks, graph wavelets, wavelet neural networks.}



\maketitle

\section{Introduction} \label{sec:Introduction}

Graph convolutional networks (GCNs) have become a powerful tool for learning from graph-structured data, which appear in various fields such as social networks, molecular chemistry, and recommendation systems. Unlike traditional data represented in grids or sequences, graphs have complex, irregular structures with nodes connected by edges, making conventional convolutional operations unsuitable.

To tackle this challenge, researchers have adapted convolution to the graph domain. One approach uses the Graph Fourier transform (GFT) \cite{ae482107de73461787258f805cf8f4ed}, which relies on the eigendecomposition of the graph Laplacian matrix. The GFT represents a graph signal in terms of its frequency components, similar to classical signal processing.

The graph convolution operator in the spectral domain is defined as:
\[
\vf *_{\mathcal{G}} \vg = \mU \big( (\mU^T \vg) \odot (\mU^T \vf) \big),
\]
where $\vf$ is the graph signal, $\vg$ is the convolution kernel, $\mU$ are the eigenvectors of the graph Laplacian, and $\odot$ denotes the element-wise Hadamard product. This operation simplifies to matrix multiplication, making it computationally efficient.

However, the GFT approach has significant limitations. First, computing the eigendecomposition is often infeasible for large graphs due to its high computational cost. Second, the learned filters are not localized in the vertex domain, making it difficult to capture local structures effectively.

These limitations underscore the need for alternative methods that efficiently perform convolution on graphs while preserving their local and global properties. To address these issues, we propose a modified spectral graph network based on the Multiresolution Matrix Factorization (MMF) \cite{pmlr-v32-kondor14} wavelet basis instead of the Laplacian eigenbasis. This approach offers several advantages:
(i) the wavelets are generally localized in both vertex and frequency domains,
(ii) the individual basis transforms are sparse, and
(iii) MMF provides an efficient way to decompose graph signals into components at different levels of granularity, offering an excellent basis for sparse approximations.

In many machine learning problems, large matrices have complex hierarchical structures that traditional low-rank methods struggle to capture. MMF is an alternative paradigm designed to capture structure at multiple scales. It is particularly effective for compressing the adjacency or Laplacian matrices of complex graphs, such as social networks \cite{pmlr-v32-kondor14}. MMF factorizations have a number of advantages, including the fact that they are easy to invert and have an 
interpretation as a form of wavelet analysis on the matrix and consequently on the underlying graph.  
The wavelets can be used for finding sparse approximations of graph signals. 

Finding the actual MMF factorization, however, is a hard optimization problem combining elements of 
continuous and combinatorial optimization. 
Most of the existing MMF algorithms just tackle this with a variety of greedy heuristics and are consequently 
brittle: the resulting factorizations typically have large variance and most of the time yield factorizations 
that are far from the optimal \cite{pmlr-v51-teneva16,8099564,ding2018multiresolution, pmlr-v196-hy22a}. 

This paper proposes an alternative approach to MMF optimization. Specifically, we use an iterative method that optimizes the factorization by backpropagating the factorization error and applying metaheuristic strategies to solve the combinatorial aspects. Although more computationally intensive than greedy methods, this ``learnable'' MMF produces higher quality factorizations and a wavelet basis that better reflects the structure of the underlying matrix or graph. Consequently, this leads to improved performance in downstream tasks.

To demonstrate the effectiveness of our learnable MMF algorithm, we introduce a wavelet extension of the Spectral Graph Networks algorithm \cite{ae482107de73461787258f805cf8f4ed}, called the Wavelet Neural Network (WNN). Our experiments show that combining learnable MMF with WNNs achieves state-of-the-art results on several graph learning tasks. By addressing the inefficiencies of the approaches based on eigendecomposition, our method provides a fast and effective convolution operation on graphs. Beyond benchmark performance, the enhanced stability of MMF optimization and the hierarchical structure's similarity to deep neural networks suggest that MMF could be integrated with other learning algorithms in the future.

\section{Related work} \label{sec:Related-work}

\textbf{Multiresolution matrix factorization.} Compressing and estimating large matrices has been extensively studied from various directions,
including (i) column/row selection methods \cite{10.1137/S0097539704442684, Drineas2006FastMC, 10.1137/S0097539704442702, 10.1145/1219092.1219097, doi:10.1137/090771806}, 
(ii) Nystr\"{o}m Method 
\cite{NIPS2000_19de10ad, pmlr-v5-kumar09a, JMLR:v13:kumar12a}, 
(iii) randomized linear algebra 
\cite{10.1561/2200000035}, and (iv) sparse PCA 
\cite{pmlr-v9-jenatton10a}.
Many of these methods come with explicit guarantees but typically make the assumption that 
the matrix to be approximated is low rank. 
MMF is more closely related to other works on constructing wavelet bases on discrete spaces, 
including wavelets defined based on diagonalizing the diffusion operator or the normalized graph Laplacian 
\cite{COIFMAN200653, HAMMOND2011129} and multiresolution on trees 
\cite{10.5555/3104322.3104370, 10.1214/07-AOAS137}. 
MMF has been used for matrix compression \cite{pmlr-v51-teneva16, pmlr-v196-hy22a}, 
kernel approximation \cite{ding2018multiresolution} 
and inferring semantic relationships in medical imaging data \cite{8099564}. 

\cite{pmlr-v32-kondor14} proposed a greedy method for multiresolution matrix factorization, which outperforms Nystr\"{o}m methods on matrices with a multilevel structure. Other approaches to solving MMF include utilizing parallelism \cite{pmlr-v51-teneva16} and implementing an incremental updating scheme \cite{8099564}. However, these methods rely on suboptimal localized heuristics, whereas our learning algorithm directly addresses global optimization.

\noindent\textbf{Graph neural networks.} Graph neural networks (GNNs) utilizing the generalization of convolution concept to graphs have been popularly applied to many learning tasks such as estimating quantum chemical computation \cite{HyEtAl2018, pmlr-v70-gilmer17a}, modeling physical systems \cite{NIPS2016_3147da8a}, predicting the progress of an epidemic or pandemic \cite{pmlr-v184-hy22a, nguyen2023predicting}, etc. 

Spectral methods such as \cite{ae482107de73461787258f805cf8f4ed} provide one way to define convolution on graphs via convolution theorem and Graph Fourier transform (GFT). 
\cite{DBLP:journals/cgf/BoscainiMMBCV15} and \cite{10.1145/3137609} both propose methods for learning class-specific descriptors for deformable shapes. Boscaini's approach \cite{DBLP:journals/cgf/BoscainiMMBCV15} uses localized spectral convolutional networks, while Huang's method \cite{10.1145/3137609} involves training a network to embed similar points close to each other in descriptor space. \cite{Defferrard16} introduced a formulation of CNNs in the context of spectral graph theory, enabling the design of fast localized convolutional filters on graphs. \cite{hammond:inria-00541855} proposed a method for constructing wavelet transforms of functions on weighted graphs using spectral graph theory, defining scaling through the spectral decomposition of the discrete graph Laplacian.
To address the high computational cost of GFT, \cite{xu2018graph} proposed to use the diffusion wavelet bases as previously defined by \cite{COIFMAN200653} instead for a faster transformation.

\section{Background on Multiresolution Matrix Factorization}


The \textit{Multiresolution Matrix Factorization} (MMF) of a matrix $\mA\in\mathbb{R}^{n\times n}$ is a factorization of the form
\begin{equation}
\mA = \mU_1^T \mU_2^T \dots \mU_L^T \mH \mU_L \dots \mU_2 \mU_1
\label{eq::mmf}
\end{equation}
where the $\mH$ and $\mU_1, \dots, \mU_L$ matrices conform to the following constraints: 
\begin{itemize}

\item Each $\mathbf{U}_\ell$ is an orthogonal matrix representing a $k$-point rotation for some small $k$, meaning it only rotates $k$ coordinates at a time. These matrices are essentially identity matrices with non-zero entries at a small set of coordinates. For a formal definition of these matrices, please refer to Def. \ref{def:rotation-matrix}.

\item We define \([n] = \{1, 2, 3, \ldots, n\}\) and \(\mathbb{I}_{\ell}\) as the set of \(k\) coordinates rotated by \(\mU_\ell\). There is a nested sequence of sets $\sS_L \subseteq \cdots \subseteq \sS_1 \subseteq \sS_0 = [n]$ such that $\mathbb{I}_{\ell} \subseteq \sS_\ell$.

\item $\mH$ is an $\sS_L$-core-diagonal matrix that is diagonal with an additional small 
$\sS_L\times \sS_L$ dimensional ``core'' at specific coordinates in $\sS_L$. The remaining entries are the same as those in a diagonal matrix. A formal definition of $\sS_L$-core-diagonal is at Def. \ref{def:core-diagonal}.
\end{itemize}

{

    \(\sS_{\ell-1}\) is can be viewed as the ``active set'' at the \(\ell^{\text{th}}\) level because \(\mathbf{U}_\ell\) is identity matrix outside the set \([n] \setminus \sS_{\ell-1}\). The \(\sS\) sets form a nested sequence indicating that when \(\mathbf{U}_\ell\) is applied at a particular level, the elements in \(\sS_{\ell} \setminus \sS_{\ell-1}\) are excluded from the active set and are not processed in future steps. This process of reducing the active set continues through all \(L\) levels, resulting in a nested subspace interpretation for the sequence of transformation. \cite{pmlr-v32-kondor14} makes the connection between MMF and multiresolution analysis \cite{192463}. 

    This multiresolution factorization reveals structure at multiple scales by sequentially applying sparse orthogonal transforms to \(A\). Each transform affects only a small set of coordinates \(\mathbb{I}_{\ell}\) in \(A\), leaving the rest unchanged. Initially, an orthogonal transform is applied, and the subset of rows and columns of \(U_1 A U_1^T\) that interact the least with the rest of the matrix capture the finest scale structure of \(A\). These corresponding rows of \(U_1\) are labeled as level one wavelets and remain invariant in subsequent steps. The process continues with a second orthogonal transform to produce \(U_2 U_1 A U_2^T U_1^T\), and this pattern is repeated, resulting in an \(L\)-level factorization as shown in Eq. \ref{eq::mmf}. The sequence of matrices, \(U_1 A U_1^T\), \(U_2 U_1 A U_2^T U_1^T\), \(\dots\) \(,H\) can be interpreted as compressed versions of \(A\) \cite{pmlr-v32-kondor14}.
}

Finding the best MMF factorization to a symmetric matrix $\mA$ involves solving
\begin{equation}
\min_{\substack{\sS_L \subseteq \cdots \subseteq \sS_1 \subseteq \sS_0 = [n] \\ \mH \in \sH^{\sS_L}_n; \mU_1, \dots, \mU_L \in \sO}} \| \mA - \mU_1^T \dots \mU_L^T \mH \mU_L \dots \mU_1 \|.
\label{eq:mmf-opt}
\end{equation}
Assuming that we measure error in the Frobenius norm, (\ref{eq:mmf-opt}) is equivalent to
\begin{equation}
\min_{\substack{\sS_L \subseteq \cdots \subseteq \sS_1 \subseteq \sS_0 = [n] \\ \mU_1, \dots, \mU_L \in \sO}} \| \mU_L \dots \mU_1 \mA \mU_1^T \dots \mU_L^T \|^2_{\text{resi}},
\label{eq:mmf-resi}
\end{equation}
where $\| \cdot \|_{\text{resi}}^2$ is the squared residual norm 
$\|\mH \|_{\text{resi}}^2 = \sum_{i \neq j; (i, j) \not\in \sS_L \times \sS_L} \lvert \mH_{i, j} \rvert^2$. 

There are two fundamental difficulties in MMF optimization: 
finding the optimal nested sequence of $\sS_\ell$ is a combinatorially hard 
(e.g., there are ${d_\ell \choose k}$ ways to choose $k$ indices out of $\sS_\ell$); 
and the solution for $\mU_\ell$ must satisfy the orthogonality constraint such that 
$\mU_\ell^T \mU_\ell = \mI$. 
The existing literature on solving this optimization problem 
\cite{pmlr-v32-kondor14, pmlr-v51-teneva16, 8099564, ding2018multiresolution} has various heuristic elements and has a number of limitations. 
First of all, there is no guarantee that the greedy heuristics (e.g., clustering) used in selecting $k$ rows/columns $\sI_\ell = \{i_1, .., i_k\} \subset \sS_\ell$ for each rotation return a globally optimal factorization. 
Instead of direct optimization for each rotation $\mU_\ell \triangleq \mI_{n - k} \oplus_{\sI_\ell} \mO_\ell$ where $\mO_\ell \in \sS\sO(k)$ 
globally and simultaneously with the objective (\ref{eq:mmf-opt}), Jacobi MMFs (see Proposition 2 of \cite{pmlr-v32-kondor14}) apply the greedy strategy of optimizing 
them locally and sequentially.
Again, this does not necessarily lead to a \textit{globally} optimal combination of rotations. Furthermore, most MMF algorithms are limited to the simplest case of $k = 2$ where 
$\mU_\ell$ is just a Givens rotation, which can be parameterized by a single variable, the rotation angle $\theta_\ell$.
This makes it possible to optimize the greedy objective by simple gradient descent, but 
larger rotations would yield more expressive factorizations and better approximations.

In contrast, we propose an iterative algorithm to directly optimize the global MMF objective 
(\ref{eq:mmf-opt}): 
\begin{itemize}
\item We use gradient descent algorithm on the Stiefel manifold to optimize all rotations 
$\{\mU_\ell\}_{\ell = 1}^L$ \textit{simultaneously}, whilst satisfying the orthogonality constraints. 
Importantly, the Stiefel manifold optimization is not limited to $k = 2$ case 
(Section ~\ref{sec:stiefel}).
\item We try to solve the problem of finding the optimal nested sequence $\sS_L \subseteq \cdots \subseteq \sS_1 \subseteq \sS_0 = [n]$ with metaheuristics like evolutionary algorithm and directed evolution. The cost function for this optimization problem is the value returned by the Stiefel
manifold optimization algorithm in equation (2).
\end{itemize}
We show that the resulting learning-based MMF algorithm outperforms existing greedy MMFs and other 
traditional baselines for matrix approximation in various scenarios (see Section \ref{sec:Experiments}).

Our mathematical notations are detailed in Appendix \ref{sec:Notation}. More background of MMF is included in Appendix \ref{sec:MMF}.


\section{Stiefel Manifold Optimization} \label{sec:stiefel}

The MMF optimization problem in (\ref{eq:mmf-opt}) and (\ref{eq:mmf-resi}) is equivalent to
\begin{equation}
\min_{\sS_L \subseteq \cdots \subseteq \sS_1 \subseteq \sS_0 = [n]} \min_{\mU_1, \dots, \mU_L \in \sO} \| \mU_L \dots \mU_1 \mA \mU_1^T \dots \mU_L^T \|^2_{\text{resi}}.
\label{eq:mmf-two-phases}
\end{equation}

In order to solve the inner optimization problem of (\ref{eq:mmf-two-phases}), 
we consider the following generic optimization with orthogonality constraints \cite{edelman1998geometry}:
\begin{equation}
\min_{\mX \in \mathbb{R}^{n \times p}} \mathcal{F}(\mX), \ \ \text{s.t.} \ \ \mX^T \mX = \mI_p,
\label{eq:opt-prob}
\end{equation}
where $\mI_p$ is the identity matrix and $\mathcal{F}(\mX): \mathbb{R}^{n \times p} \rightarrow \mathbb{R}$
is a differentiable function. 
The feasible set $\mathcal{V}_p(\mathbb{R}^n) = \{\mX \in \mathbb{R}^{n \times p}: \mX^T \mX = \mI_p\}$ is referred to as the 
Stiefel manifold of $p$ orthonormal vectors in $\mathbb{R}^{n}$ that has dimension equal to $np - \frac{1}{2}p(p + 1)$. 
We will view $\mathcal{V}_p(\mathbb{R}^n)$ as an embedded submanifold of $\mathbb{R}^{n \times p}$. 

When there is more than one orthogonal constraint, (\ref{eq:opt-prob}) is written as
\begin{equation}
\min_{\mX_1 \in \mathcal{V}_{p_1}(\mathbb{R}^{n_1}), \dots, \mX_q \in \mathcal{V}_{p_q}(\mathbb{R}^{n_q})} \mathcal{F}(\mX_1, \dots, \mX_q)
\label{eq:opt-prob-extended}
\end{equation}
where there are $q$ variables with corresponding $q$ orthogonal constraints. 

For example, in the MMF optimization problem (\ref{eq:mmf-opt}), suppose we are already given $\sS_L \subseteq \cdots \subseteq \sS_1 \subseteq \sS_0 = [n]$ meaning that the indices of active rows/columns at each resolution were already determined, for simplicity. In this case, we have $q = L$ number of variables such that each variable $\mX_\ell = \mO_\ell \in \mathbb{R}^{k \times k}$, where $\mU_\ell = \mI_{n - k} \oplus_{\sI_\ell} \mO_\ell \in \mathbb{R}^{n \times n}$ in which $\sI_\ell$ is a subset of $k$ indices from $\sS_\ell$, must satisfy the orthogonality constraint. The corresponding objective function is 
\begin{equation}
\mathcal{F}(\mO_1, \dots, \mO_L) = \| \mU_L \dots \mU_1 \mA \mU_1^T \dots \mU_L^T \|^2_{\text{resi}}.
\label{eq:mmf-core}
\end{equation}
Details about Stiefel manifold optimization are included in Appendix \ref{sec:proof}.

\section{Meta-heuristics}

\subsection{Problem Formulation} \label{sec:ea-problem}

{
We frame the task of identifying the optimal nested sequence of sets 
$\sS_L \subseteq \cdots \subseteq \sS_1 \subseteq \sS_0 = [n]$ as learning a set of wavelet indices to solve 
the MMF optimization in (\ref{eq:mmf-opt}). This involves two primary components for index selection at each resolution level $\ell \in \{1, \ldots, L\}$:

\begin{itemize}
\item Select the set of indices $\sT_\ell \subset \sS_{\ell - 1}$ representing the rows/columns to be wavelets at this level, which are then eliminated by defining $\sS_\ell = \sS_{\ell - 1} \setminus \sT_\ell$. To simplify computation, we assume that each resolution level selects only one row/column as the wavelet, such that $\lvert \sT_\ell \rvert = 1$. Consequently, the cardinality of $\sS_\ell$ decreases by 1 at each level, giving $d_\ell = n - \ell$. The core block size of $\mH$ becomes $(n - L) \times (n - L)$, corresponding to exactly $n - L$ active rows/columns at the end.
\item Select $k - 1$ indices $\sI_\ell = \{i_1, \ldots, i_{k - 1}\} \subset \sS_{\ell - 1}$ to construct the corresponding rotation matrix $\mU_\ell$ (see Section ~\ref{sec:stiefel}).
\end{itemize}

\begin{figure}[t]
    \centering
    \begin{tikzpicture}[node distance=1cm, auto]

        \node (S0) at (0, 0) {\(\sS_0 = \{1, 2, 3, 4\}\)};
        \node (S1) at (4, 0) {\(\sS_1 = \sS_0 \setminus \sT_1 = \{1, 3, 4\}\)};
        \node (S2) at (8, 0) {\(\sS_2 = \sS_1 \setminus \sT_2 = \{1, 3\}\)};

        \node (IT1) at (0, -1.5) {\(\sT_1 = \{2\}, \sI_1 = \{4\}\)};
        \node (IT2) at (4, -1.5) {\(\sT_2 = \{4\}, \sI_2 = \{1\}\)};
        \node (IT3) at (8, -1.5) {\(\sT_3 = \{1\}, \sI_3 = \{3\}\)};

        \node (U1) at (0, -4.5) {
            \(\mathbf{U_1} = \begin{pNiceMatrix}[margin]
                1 & 0 & 0 & 0 \\
                0 & a_{11} & 0 & a_{13} \\
                0 & 0 & 1 & 0 \\
                0 & a_{31} & 0 & a_{33} 
            \end{pNiceMatrix}\)
        };
        \node (U2) at (4, -4.5) {
            \(\mathbf{U_2} = \begin{pNiceMatrix}[margin]
                b_{00} & 0 & 0 & b_{03} \\
                0 & 1 & 0 & 0 \\
                0 & 0 & 1 & 0 \\
                b_{30} & 0 & 0 & b_{33} 
            \end{pNiceMatrix}\)
        };
        \node (U3) at (8, -4.5) {
            \(\mathbf{U_3} = \begin{pNiceMatrix}[margin]
                c_{00} & 0 & c_{02} & 0 \\
                0 & 1 & 0 & 0 \\
                c_{20} & 0 & c_{22} & 0 \\
                0 & 0 & 0 & 1 
            \end{pNiceMatrix}\)
        };

        \draw[->, shorten >=1pt, shorten <=1pt] (S0) -- (S1);
        \draw[->, shorten >=1pt, shorten <=1pt] (S1) -- (S2);

        \draw[->, shorten >=1pt, shorten <=1pt] (S0) -- (IT1);
        \draw[->, shorten >=1pt, shorten <=1pt] (S1) -- (IT2);
        \draw[->, shorten >=1pt, shorten <=1pt] (S2) -- (IT3);

        \draw[->, shorten >=1pt, shorten <=1pt] (IT1) -- (U1);
        \draw[->, shorten >=1pt, shorten <=1pt] (IT2) -- (U2);
        \draw[->, shorten >=1pt, shorten <=1pt] (IT3) -- (U3);

    \end{tikzpicture}
    \caption{Visualization of the nested set selection process for a \(4 \times 4\) matrix \(\mathbf{A}\) with \(L = 3\) and \(k = 2\). The process, depicted from left to right, demonstrates the trimming of the set \(\sS\). The sets \(\sT_\ell\) and \(\sI_\ell\) are chosen by metaheuristics, while the orthogonal transform \(\sU_\ell\) rotates all \(k\) coordinates in \(\sT_\ell \cup \sI_\ell\).}
    \label{fig:trimming_sets}
\end{figure}

A small example to illustrate this index selection is given in Fig. \ref{fig:trimming_sets}. The two algorithms use the Frobenius error from the Stiefel manifold optimization as the fitness function. In the second step of selecting \(k - 1\) indices, we identify the \(k - 1\) indices whose rows are closest to the wavelet row in Euclidean distance (see \ref{sec:Optimization} for the inspiration of this heuristic), reducing this problem to finding an ordered set of \(L\) wavelet indices.

In both metaheuristics, the candidate solution is an ordered set of \(L\) indices chosen as wavelet indices. The fitness function employed is the initial cost from the Stiefel manifold optimization algorithm, without training iterations, to estimate solution quality. This approach is designed to minimize the computational time when running the metaheuristics, as the optimization phase of the Stiefel manifold algorithm is only executed after identifying the best solution through the metaheuristics. Thus, the initial cost from the Stiefel manifold optimization serves as a cost-effective estimate of solution quality.

}

\subsection{Evolutionary Algorithm}

{
We employ a metaheuristics-based approach grounded in evolutionary algorithms to solve the optimization problem of finding the optimal nested sequence for Multiresolution Matrix Factorization (MMF). 

Evolutionary algorithms \cite{MUHLENBEIN198865}, inspired by the process of natural selection and genetics, are particularly effective for complex optimization tasks. Our method iteratively improves a population of candidate solutions by applying operations such as selection, crossover, and mutation. The selection process identifies the most promising candidates based on a fitness function, while crossover and mutation introduce genetic diversity, enabling the exploration of the solution space.

\begin{algorithm}[H]
\caption{Evolutionary Algorithm (EA) for MMF} \label{alg:ea}
\begin{algorithmic}[1]
\State \textbf{Input:} Matrix \( \mA \) to factorize, number of resolution levels \( L \), number of indices chosen for each resolution level \( k \), size of the matrix \( n \), maximum population size \( p_{\max} \) (must be even), number of iterations \( i_{\max} \), mutation rate \( m \) (\( 0 \leq m \leq 1 \)), the fitness function \( f \) representing the Frobenius error from the Stiefel manifold optimization algorithm.

\State Initialize the population \( P \) with \( p_{\max} \) random ordered sets of size \( L \) from the range \([1, n]\)
\State Initialize \( \sigma^* \) as a random solution
\For{$i = 1$ \textbf{to} $i_{\max}$}
    \State Evaluate fitness \( f(\sigma) \) for each candidate \( \sigma \in P \)
    \State Select the top half of the candidates in \( P \) based on fitness, denoted as \( P_{\text{parents}} \)
    \State \( P_{\text{offspring}} \gets \emptyset \)
    \For{$j = 1$ \textbf{to} $\frac{p_{\max}}{2}$}
        \State Randomly select 2 parents \( \sigma_1, \sigma_2 \in P_{\text{parents}} \)
        \State \( \tau_1, \tau_2 \gets \text{Crossover}(\sigma_1, \sigma_2) \)
        \State \( P_{\text{offspring}} \gets P_{\text{offspring}} \cup \{\tau_1, \tau_2\} \)
    \EndFor
    \For {each \( \tau \in P_{\text{offspring}} \)}
        \State With probability \( m \), swap 2 random values in \( \tau \)
        \State With probability \( m \), replace a random value in \( \tau \) with a new value not already in \(\tau\)
    \EndFor
    \State \( P \gets P_{\text{offspring}} \)
    \State \( \sigma' \gets \text{argmin}_{\sigma \in P} f(\sigma) \)
    \If{\(f(\sigma') < f(\sigma^*)\)}
        \State \( \sigma^* \gets \sigma'\)
    \EndIf
\EndFor
\State \textbf{Return:} \(\sigma^*\)
\end{algorithmic}
\end{algorithm}

The mutation part of the algorithm consists of two independent mutation operators. The first mutation operator is randomly swapping two indices of the candidate solution. The second mutation operator is replacing a random value in the ordered set with another value which is not already in the solution. A solution cannot have duplicated values.

The crossover operator used in our approach is a random one-point crossover. However, a challenge arises because this crossover can create invalid offspring with duplicated elements if both parents have common elements. To address this issue, we implement a strategy to separate the values common to both parents from the other values.

Specifically, the values that are common to both parents will be preserved and not subjected to crossover. Next, we separate the remaining values (those not common to both parents) into two new sets of genes. These new sets will contain only unique values, ensuring that no duplicates are present. A normal one-point crossover can be perfomed on these new gene sets, creating two new sets of genes without any common values. After the crossover, reinsert the common values back into the respective offspring. This ensures that the offspring are valid and maintain the necessary elements from both parents without any duplication. Fig. \ref{fig:crossover} details an example using this crossover operator.

By using this method, we ensure that the resulting offspring are valid and retain genetic diversity from both parents while avoiding any duplicate values.

\begin{algorithm}[H]
\caption{Crossover Algorithm} \label{alg:crossover} 
\begin{algorithmic}[1]
\State \textbf{Input:} Two sequences of distinct values \(\sigma_1\), \(\sigma_2\).
\State \(D \gets \sigma_1 \cap \sigma_2\)
\State \(\sigma_1' \gets \sigma_1 \setminus D\)
\State \(\sigma_2' \gets \sigma_2 \setminus D\)
\State \(\tau_1' \text{, } \tau_2' \gets \text{OnePointCrossover}(\sigma_1' \text{, } \sigma_2')\)
\State \(\tau_1 \gets \tau_1' \cup D\)
\State \(\tau_2 \gets \tau_2' \cup D\)
\State \textbf{Return:} \(\tau_1 \text{, } \tau_2\)
\end{algorithmic}
\end{algorithm}

\begin{figure*}
    \centering
    \begin{tikzpicture}[scale=0.8, every node/.style={scale=0.8}]
        \node at (0, 4) {Parent 1: [1, 2, 3, 4, 5, 6]};
        \node at (0, 3) {Parent 2: [4, 5, 6, 7, 8, 9]};
        
        \node at (0, 2) {Common Values: \{4, 5, 6\}};
        \node at (0, 1.5) {Unique Values in Parent 1: \{1, 2, 3\}};
        \node at (0, 1) {Unique Values in Parent 2: \{7, 8, 9\}};
        
        \draw [->] (0, 0.5) -- (0, 0);
        \node at (0, -0.5) {Perform one-point crossover on unique values:};
        \node at (0, -1) {Crossover Point: 2};
        
        \node at (-4, -2) {Unique Offspring 1: [1, 2, 9]};
        \node at (4, -2) {Unique Offspring 2: [7, 8, 3]};
        
        \draw [->] (-4, -2.5) -- (-4, -3);
        \draw [->] (4, -2.5) -- (4, -3);
        \node at (-4, -3.5) {Offspring 1: [1, 2, 9, 4, 5, 6]};
        \node at (4, -3.5) {Offspring 2: [7, 8, 3, 4, 5, 6]};
    \end{tikzpicture}
    \caption{One-point crossover with common and unique values.}
    \label{fig:crossover}
\end{figure*}
}

\subsection{Directed Evolution}

{
Directed evolution, a laboratory methodology wherein biological entities possessing desired characteristics are generated through iterative cycles of genetic diversification and library screening or selection, has emerged as a highly valuable and extensively utilized instrument in both fundamental and practical realms of biological research \cite{doi:10.1021/ar960017f, https://doi.org/10.1002/anie.201708408, Romero2009ExploringPF}.

\begin{algorithm}[H]
\caption{Directed Evolution for MMF} \label{alg:de} 
\begin{algorithmic}[1]
\State \textbf{Input:} Matrix \( \mA \) to factorize, number of resolution levels \( L \), number of indices chosen for each resolution level \( k \), size of the matrix \( n \), maximum population size \( p_{\max} \) (must be even), number of iterations \( i_{\max} \), the fitness function \( f \) representing the Frobenius error from the Stiefel manifold optimization algorithm.

\State Initialize the population \( P \) with \( p_{\max} \) random ordered sets of size \( L \) from the range \([1, n]\)
\State Initialize \( \sigma^* \) as a random solution
\For{$i = 1$ \textbf{to} $i_{\max}$}
    \State Evaluate fitness \( f(\sigma) \) for each candidate \( \sigma \in P \)
    \State Select the top half of the candidates in \( P \) based on fitness, denoted as \( P_{\text{parents}} \)
    \State \( P_{\text{offspring}} \gets \emptyset \)
    \For {each \( \sigma \in P_{\text{parents}} \)}
        \State \(\tau \gets \sigma\)
        \State Swap 2 random values in \( \tau \)
        \State Replace a random value in \( \tau \) with a new value not already in \(\tau\)
        \State \( P_{\text{offspring}} \gets P_{\text{offspring}} \cup \{\tau\} \)
    \EndFor
    \State \( P \gets P_{\text{parents}} \cup P_{\text{offspring}} \)
    \State \( \sigma' \gets \text{argmin}_{\sigma \in P} f(\sigma) \)
    \If{\(f(\sigma') < f(\sigma^*)\)}
        \State \( \sigma^* \gets \sigma'\)
    \EndIf
\EndFor
\State \textbf{Return:} \(\sigma^*\)
\end{algorithmic}
\end{algorithm}

This directed evolution algorithm uses the same mutation operators as the evolutionary algorithm. Mutation is performed on every member of the parent population. The parent population is part of the next generation. 

\begin{figure*}
  \begin{center}
  \includegraphics[width=\textwidth,page=1]{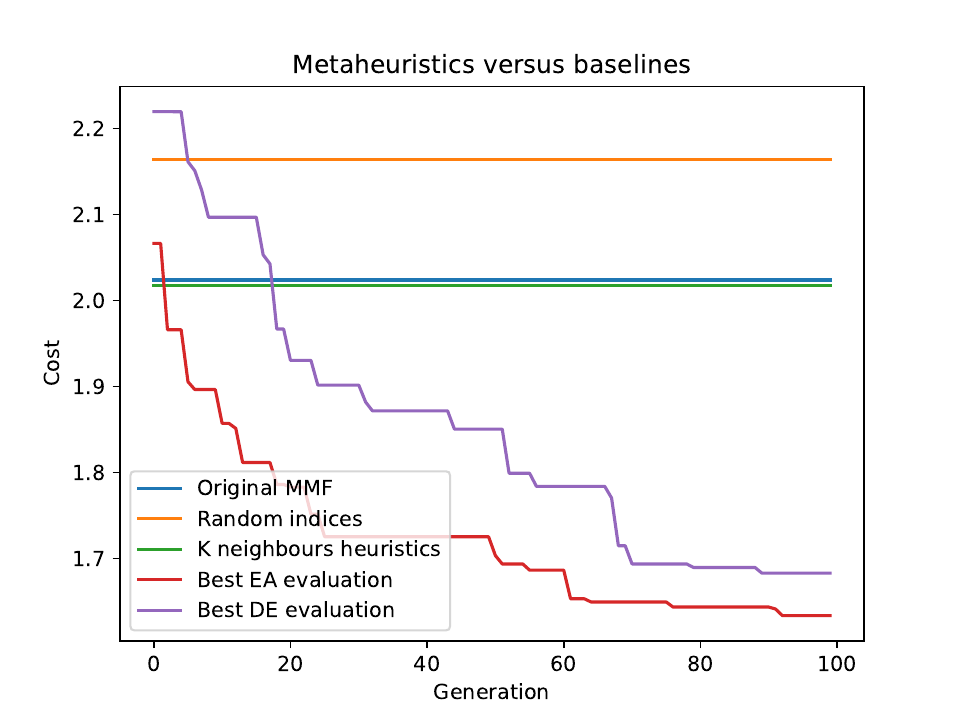}
  \caption{\label{fig:metaheuristics_convergence} Metaheuristics convergence for Karate Club data. Selection process based on Evolutionary Algorithm (EA) and Directed Evolution (DE) outperforms the original heuristics proposed by \cite{pmlr-v32-kondor14}.}
  \end{center}
\end{figure*}

\begin{table}[h!]
    \centering
    \begin{tabular}{|| c | c ||}
        \hline
        \textbf{Method} & \textbf{Runtime (seconds)} \\ 
        \hline
        Original MMF & 0.044 \\
        \hline
        Random indices MMF & 0.012 \\
        \hline
        Heuristics MMF & 0.022 \\
        \hline
        EA MMF & 115.726 \\
        \hline
        DE MMF & 4.641 \\
        \hline
    \end{tabular}
    \caption{Runtimes for different MMF methods.}
    \label{tab:MMFruntimes}
\end{table}

\begin{figure*}
\begin{center}
\begin{tabular}{ccc}
\includegraphics[scale=0.15]{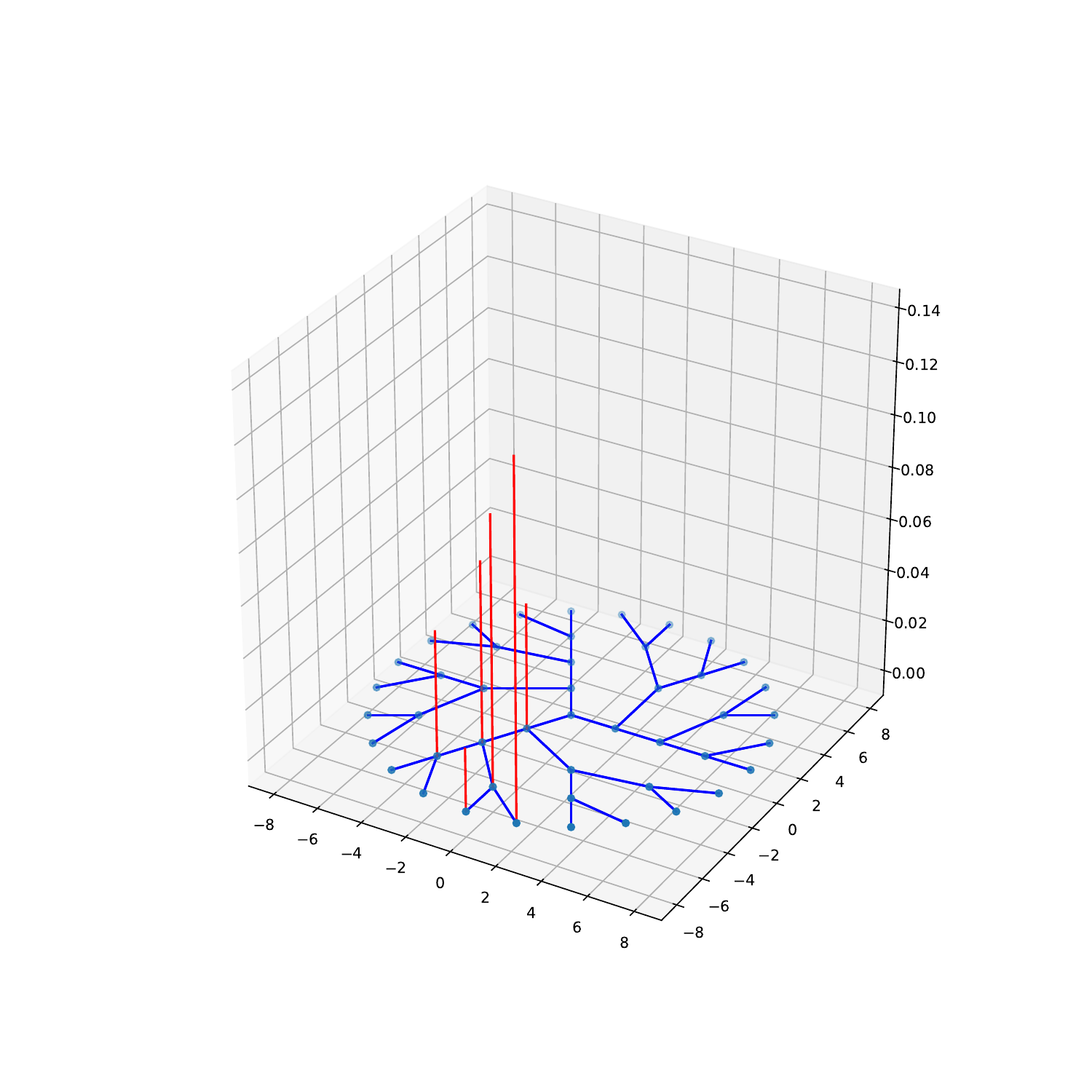}
\vspace{-15pt}
&
\includegraphics[scale=0.15]{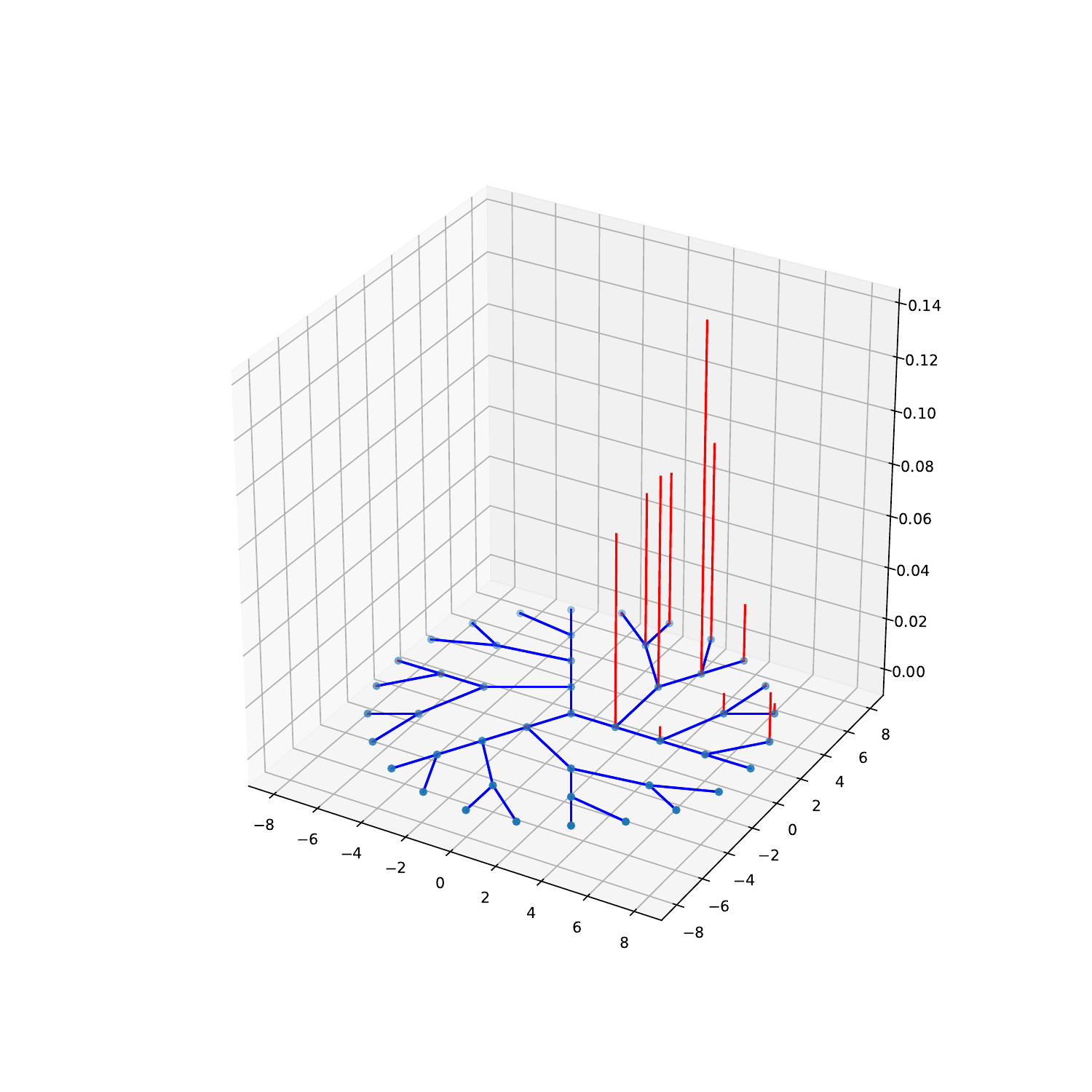}
\vspace{-2pt}
&
\includegraphics[scale=0.15]{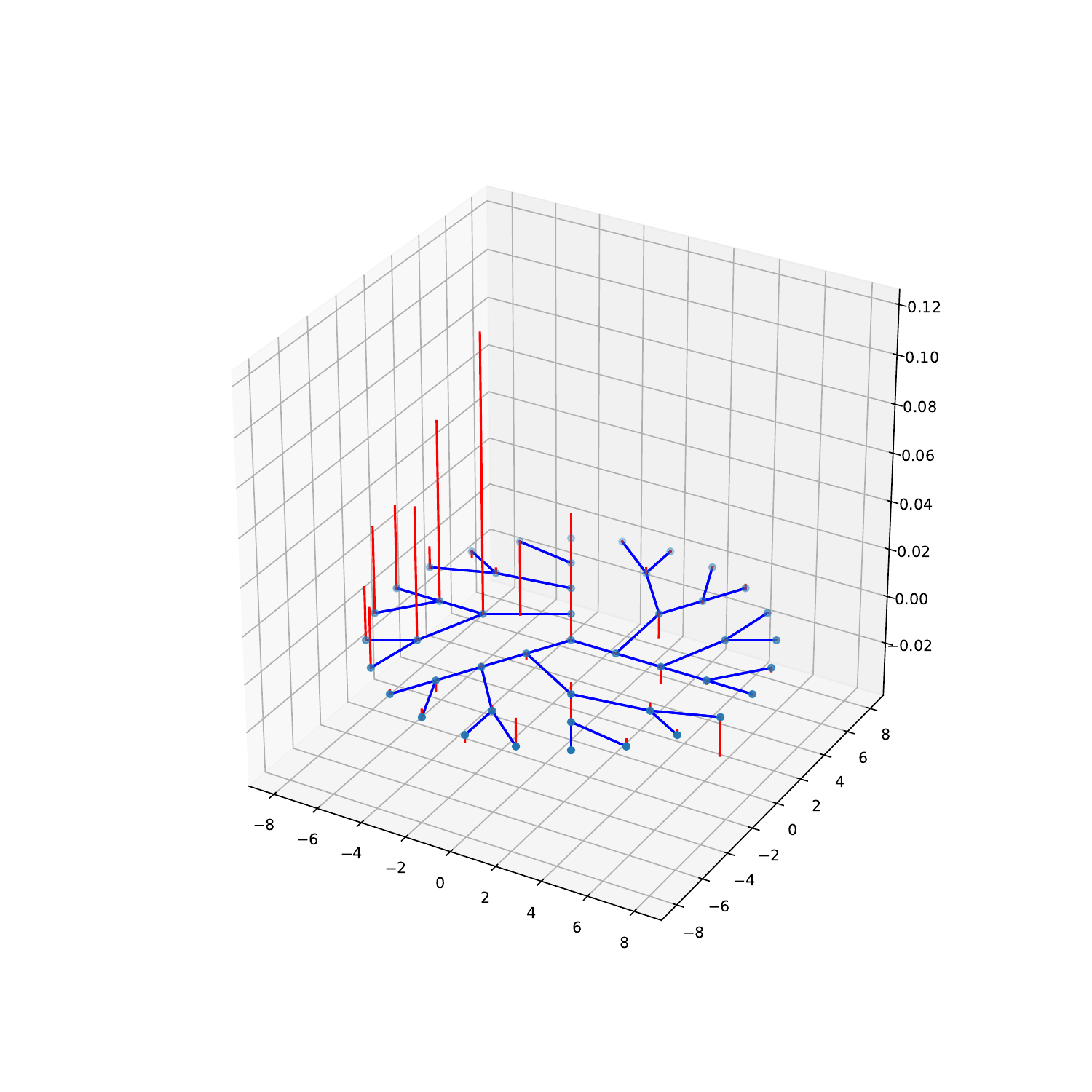}
\vspace{-2pt}
\\
$\ell = 1$
&
$\ell = 20$
&
$\ell = 39$
\end{tabular}
\end{center}
\caption{\label{fig:wavelets-visual} Visualization of some of the wavelets on the Cayley tree of 46 vertices. 
The low index wavelets (low $\ell$) are highly localized, whereas the high index ones are smoother and spread out over large
parts of the graph.}
\end{figure*}

Figure \ref{fig:metaheuristics_convergence} illustrates the convergence behavior of two metaheuristics applied to the Karate Club matrix, contrasted with various random and heuristic baselines. Notably, after 100 generations, the Evolutionary Algorithm (EA) surpasses all other baselines in performance. Additionally, Directed Evolution (DE) demonstrates effective MMF approximation. However, it is worth noting that EA requires significantly more time to reach convergence over 100 generations. This disparity in time consumption is attributed to the larger population size initialized in EA compared to DE. Other heuristic baselines, although quicker, fail to provide comparably accurate approximations as the two metaheuristics.

Figure \ref{fig:wavelets-visual} depicts the wavelet bases at different levels of resolution. The low index wavelets are localized since it teases out distinct local structures of the matrix, separating it into rough and smoother components. The higher wavelet bases are left only with the smooth part of the matrix.  This observation aligns with the interpretation provided for MMF in \ref{sec:mra}. 
 
}

\section{Wavelet Neural Networks on Graphs} \label{sec:networks}

\subsection{Motivation}

The eigendecomposition of the normalized graph Laplacian operator $\tilde{\mL} = \mU^T \mH \mU$ 
can be used as the basis of a graph Fourier transform. \cite{6494675} defines graph Fourier transform (GFT) on a graph $\mathcal{G} = (V, E)$ of a graph signal $\vf \in \mathbb{R}^n$ (that is understood as a function $f: V \rightarrow \mathbb{R}$ defined on the vertices of the graph) as $\hat{\vf} = \mU^T \vf$, and the inverse graph Fourier transform as $\vf = \mU \hat{\vf}$. Analogously to the classical Fourier transform, GFT provides a way to represent a graph signal in two domains: the vertex domain and the graph spectral domain; to filter graph signal according to smoothness; and to define the graph convolution operator, denoted as $*_{\mathcal{G}}$:
\begin{equation}
\vf *_{\mathcal{G}} \vg = \mU \big( (\mU^T \vg) \odot (\mU^T \vf) \big),
\label{eq:gft-conv}
\end{equation}
where $\vg$ denotes the convolution kernel, and $\odot$ is the element-wise Hadamard product. 
If we replace the vector $\mU^T \vg$ by a diagonal matrix $\tilde{\vg}$, then we can rewrite the Hadamard 
product in Eq.~(\ref{eq:gft-conv}) to matrix multiplication as $\mU \tilde{\vg} \mU^T \vf$ (that is understood as filtering the signal $\vf$ by the filter $\tilde{\vg}$). Based on GFT, \cite{ae482107de73461787258f805cf8f4ed} and \cite{NIPS2016_04df4d43} construct convolutional neural networks (CNNs) learning on spectral domain for discrete structures such as graphs. However, there are two fundamental limitations of GFT:
\begin{itemize}
\item High computational cost: eigendecomposition of the graph Laplacian has complexity $O(n^3)$, 
and ``Fourier transform'' itself involves multiplying the signal with a dense matrix of eigenvectors. 
\item The graph convolution is not localized in the vertex domain, even 
if the graph itself has well defined local communities.
\end{itemize}
To address these limitations, we propose a modified spectral graph network based on the MMF wavelet 
basis rather than the eigenbasis of the Laplacian. 
This has the following advantages: 
(i) the wavelets are generally localized in both vertex domain and frequency, 
(ii) the individual basis transforms are sparse, and 
(iii) MMF provides a computationally efficient way of decomposing graph signals into components at 
different granularity levels and an excellent basis for sparse approximations.

\newpage 
\subsection{Network construction}


{
In this section, we define a convolution layer based on the wavelet bases from the MMF. This construction is inspired mainly from the GFT defined in \cite{ae482107de73461787258f805cf8f4ed} and the connection between MMF and multiresolution analysis suggested in \cite{pmlr-v32-kondor14}. 

\cite{pmlr-v32-kondor14} demonstrated that MMF aligns with the classical theory of multiresolution analysis (MRA), transitioning from the real line \cite{192463} to discrete spaces. In MRA of a symmetric matrix \(A \in \mathbb{R}^{n \times n}\), the goal is to identify a sequence of subspaces:
\begin{equation}
\sV_{L} \subset \dots \subset \sV_2 \subset \sV_1 \subset \sV_0
\label{eq:subspace-sequence-6}
\end{equation}
The process is akin to an iterative refinement, where each subspace \(\sV_\ell\) is decomposed into an orthogonal sum: \(\sV_\ell = \sV_{\ell + 1} \oplus \sW_{\ell + 1}\), comprising a smoother part \(\sV_{\ell + 1}\) (the approximation space) and a rougher part \(\sW_{\ell + 1}\) (the detail space) (refer to Fig.~\ref{fig:subspaces}). Within each subspace \(\sV_\ell\), there exists an orthonormal basis denoted by \(\Phi_\ell \triangleq \{\phi_m^\ell\}_m\), where each basis function is referred to as a \textit{father} wavelet. Similarly, the complementary space \(\sW_\ell\) possesses an orthonormal basis denoted by \(\Psi_\ell \triangleq \{\psi_m^\ell\}_m\), with each basis function termed a \textit{mother} wavelet (see \ref{sec:mra} for a deeper exploration of MMF's interpretation within multiresolution analysis). Based on these wavelet bases, we can define a wavelet transform for a symmetric matrix.
}

In the case $\mA$ is the normalized graph Laplacian of a graph $\mathcal{G} = (V, E)$, the wavelet transform (up to level $L$) expresses a graph signal (function over the vertex domain) $f: V \rightarrow \mathbb{R}$, without loss of generality $f \in \sV_0$, as:
$$f(v) = \sum_{\ell = 1}^L \sum_m \alpha_m^\ell \psi_m^\ell(v) + \sum_m \beta_m \phi_m^L(v), \ \ \ \ \text{for each} \ \ v \in V,$$ 
where $\alpha_m^\ell = \langle f, \psi_m^\ell \rangle$ and $\beta_m = \langle f, \phi_m^L \rangle$ are the wavelet coefficients. At each level, a set of coordinates $\sT_\ell \subset \sS_{\ell -  1}$ are selected to be the wavelet indices, and then to be eliminated from the active set by setting $\sS_\ell = \sS_{\ell - 1} \setminus \sT_\ell$ (see Section \ref{sec:ea-problem}). Practically, we make the assumption that we only select $1$ wavelet index for each level that results in a single mother wavelet $\psi^\ell = [\mA_\ell]_{i^*, :}$ where $i^*$ is the selected index (see Section \ref{sec:ea-problem}). We get exactly $L$ mother wavelets $\overline{\psi} = \{\psi^1, \psi^2, \dots, \psi^L\}$. On the another hand, the active rows of $\mH = \mA_L$ make exactly $N - L$ father wavelets $\overline{\phi} = \{\phi^L_m = \mH_{m, :}\}_{m \in \sS_L}$. In total, a graph of $N$ vertices has exactly $N$ wavelets (both mothers and fathers). 

Analogous to the convolution based on GFT \cite{ae482107de73461787258f805cf8f4ed}, each convolution layer $k = 1, .., K$ of our wavelet network transforms an input vector $\vf^{(k - 1)}$ of size $\lvert V \rvert \times F_{k - 1}$ into an output $\vf^{(k)}$ of size $\lvert V \rvert \times F_k$ as
\begin{equation}
\vf^{(k)}_{:, j} = \sigma \bigg( \mW \sum_{i = 1}^{F_{k - 1}} \vg^{(k)}_{i, j} \mW^T \vf^{(k - 1)}_{:, i} \bigg) \ \ \ \ \text{for} \ \ j = 1, \dots, F_k,
\label{eq:wavevlet-conv}
\end{equation}
where $\mW$ is our wavelet basis matrix as we concatenate $\overline{\phi}$ and $\overline{\psi}$ column-by-column, $\vg^{(k)}_{i, j}$ is a parameter/filter in the form of a diagonal matrix learned in spectral domain similar to the filter used in the original GFT construction \cite{ae482107de73461787258f805cf8f4ed}, and $\sigma$ is an element-wise linearity (e.g., ReLU, sigmoid, etc.). Each layer transforms the input features \(f^{(k - 1)}\) into a different domain, performs filtering operations defined by the parameter \(g^{(k)}\) before reverting features back to the original domain. The training process is responsible for tuning the filter \(g^{(k)}\) to extract relevant information from the input graph signal.

\section{Experiments} \label{sec:Experiments}

\subsection{Molecular graphs classification}

\begin{table*}
\begin{center}
\begin{tabular}{||l | c | c | c | c | c ||}
\hline
\textbf{Method} & \textbf{MUTAG} & \textbf{PTC} & \textbf{PROTEINS} & \textbf{NCI1} \\
\hline
\hline
DGCNN {\small \cite{Zhang2018AnED}} & 85.83 $\pm$ 1.7 & 58.59 $\pm$ 2.5 & 75.54 $\pm$ 0.9 & 74.44 $\pm$ 0.5 \\
\hline
PSCN {\small \cite{Niepert2016}} & 88.95 $\pm$ 4.4 & 62.29 $\pm$ 5.7 & 75 $\pm$ 2.5 & 76.34 $\pm$ 1.7 \\
\hline
DCNN {\small \cite{10.5555/3157096.3157320}} & N/A & N/A & 61.29 $\pm$ 1.6 & 56.61 $\pm$ 1.0 \\
\hline
CCN {\small \cite{HyEtAl2018}} & \textbf{91.64 $\pm$ 7.2} & \textbf{70.62 $\pm$ 7.0} & N/A & 76.27 $\pm$ 4.1 \\
\hline
GK {\small \cite{pmlr-v5-shervashidze09a}} & 81.39 $\pm$ 1.7 & 55.65 $\pm$ 0.5 & 71.39 $\pm$ 0.3 & 62.49 $\pm$ 0.3 \\
\hline
RW {\small \cite{10.5555/1756006.1859891}} & 79.17 $\pm$ 2.1 & 55.91 $\pm$ 0.3 & 59.57 $\pm$ 0.1 & N/A \\
\hline
PK {\small \cite{Neumann2016}} & 76 $\pm$ 2.7 & 59.5 $\pm$ 2.4 & 73.68 $\pm$ 0.7 & 82.54 $\pm$ 0.5 \\
\hline
WL {\small \cite{JMLR:v12:shervashidze11a}} & 84.11 $\pm$ 1.9 & 57.97 $\pm$ 2.5 & 74.68 $\pm$ 0.5 & \textbf{84.46 $\pm$ 0.5} \\
\hline
IEGN {\small \cite{maron2018invariant}} & 84.61 $\pm$ 10 & 59.47 $\pm$ 7.3 & 75.19 $\pm$ 4.3 & 73.71 $\pm$ 2.6 \\
\hline
\hline
\textbf{MMF} & 86.31 $\pm$ 9.47 & 67.99 $\pm$ 8.55 & \textbf{78.72 $\pm$ 2.53} & 71.04 $\pm$ 1.53 \\
\hline
\end{tabular}
\end{center}
\caption{\label{tbl:graph-classification} Molecular graphs classification. Baseline results are taken from \cite{maron2018invariant}.}
\end{table*}

We trained and evaluated our wavelet networks (WNNs) on standard graph classification benchmarks including four bioinformatics datasets: 
(1) MUTAG, which is a dataset of 188 mutagenic aromatic and heteroaromatic nitro compounds with 7 discrete 
labels \cite{doi:10.1021/jm00106a046}; (2) PTC, which consists of 344 chemical compounds with 19 discrete 
labels that have been tested for positive or negative toxicity in lab rats \cite{10.1093/bioinformatics/btg130}; 
(3) PROTEINS, which contains 1,113 molecular graphs with binary labels, 
where nodes are secondary structure elements (SSEs) and there is an edge between two nodes if they are 
neighbors in the amino-acid sequence or in 3D space \cite{ProteinKernel}; 
(4) NCI1, which has 4,110 compounds with binary labels, each screened for activity against small cell 
lung cancer and ovarian cancer lines \cite{NCIDataset}. 
Each molecule is represented by an adjacency matrix, and we represent each atomic type as a one-hot vector 
and use them as the node features.

We factorize all normalized graph Laplacian matrices in these datasets by MMF with $K = 2$ to obtain the wavelet bases. 
Again, MMF wavelets are \textbf{sparse} and suitable for fast transform via sparse matrix multiplication. 
{
The sparsity of wavelet bases, as shown in Table \ref{tab:sparsity_bases}, highlights a significant compression compared to the Fourier bases derived from the eigendecomposition of the graph Laplacian.

\begin{table}[h!]
    \centering
    \begin{tabular}{|| c | c | c ||}
        \hline
        \textbf{Dataset} & \textbf{Fourier bases}  & \textbf{Wavelet bases}\\ 
        \hline
        MUTAG & 99.71\% & 19.23\%\\
        \hline
        PTC & 99.30\% & 18.18\%\\
        \hline
        PROTEINS & 99.33\% & 2.26\%\\
        \hline
        NCI1 & 99.04\% & 11.43\%\\
        \hline
    \end{tabular}
    \caption{ Sparsity bases (i.e. percentage of non-zeros).}
    \label{tab:sparsity_bases}
\end{table}

}

Our WNNs contain 6 layers of spectral convolution, 32 hidden units for each node, 
and are trained with 256 epochs by Adam optimization with an initial learning rate of $10^{-3}$. 
We follow the evaluation protocol of 10-fold cross-validation from \cite{Zhang2018AnED}. 
We compare our results to several deep learning methods and popular graph kernel methods. Baseline results are taken from \cite{maron2018invariant}.

For graph kernel methods, we compare our model with four popular approaches: the graphlet kernel (GK), the random walk kernel (RW), the propagation kernel (PK), and the Weisfeiler-Lehman subtree kernel (WL). Each of these methods employs unique strategies for capturing graph structure and similarity. The graphlet kernel focuses on counting occurrences of small subgraphs, known as graphlets, to measure graph similarity. In contrast, the random walk kernel simulates random walks on graphs and compares the distributions of these walks to compute similarity. The propagation kernel, on the other hand, considers the propagation of labels or information through the graph to determine similarity. Lastly, the Weisfeiler-Lehman subtree kernel compares graphs based on the structural information captured by subtrees rooted at each node, iteratively refining node representations. Our results demonstrate that our method outperforms all other kernel methods on three datasets: MUTAG, PTC, and PROTEINS. However, the WL kernel achieves the best performance on the NCI1 dataset.

For deep learning methods, we compare our model with five established approaches from the literature: DGCNN, PSCN, DCNN, CCN, and IEGN. Thse deep learning methods are more closely related to our WWN, as they all leverage neural network architectures for graph analysis. Our method ranks 3rd on the MUTAG dataset, 2nd on the PTC dataset, 1st on the PROTEINS dataset, and performs the worst on the NCI1 dataset.

Our WNNs outperform 6/8, 7/8, 8/8, and 2/8 baseline methods on MUTAG, PTC, PROTEINS, and NCI1, 
respectively (see Table \ref{tbl:graph-classification}).

Despite these promising results, our WNN model has some limitations that we need to address in future work. One significant limitation is its performance on the NCI1 dataset, where it doese not perform as well as other methods. This suggests that our model might struggle with certain types of graphs or larger datasets. Especially considering that among the four datasets, NCI1 has the largest number of unique atom types (37, compared to less than 20 in the other three datasets) and it is also the largest dataset in terms of size. Nonetheless, further experimentation is needed to confirm the types of graph for which the model's accuracy is suboptimal.


\subsection{Node classification on citation graphs}

To further evaluate the wavelet bases returned by our learnable MMF algorithm, we construct our wavelet networks 
(WNNs) as in Sec.~\ref{sec:networks} and apply it to the task of node classification on two citation graphs, 
Cora $(N = 2,708)$ and Citeseer $(N = 3,312)$ \cite{PSen2008} 
in which nodes and edges represent documents and citation links. 

In node classification tasks, assume the number of classes is $C$, 
the set of labeled nodes is $V_{\text{label}}$, 
and we are given a normalized graph Laplacian $\tilde{L}$ and an input node feature matrix $\vf^{(0)}$. 
First of all, we apply our MMF learning algorithm to factorize $\tilde{L}$ and produce our wavelet basis matrix $\mW$. Then, we construct our wavelet network as a multi-layer CNNs with each convolution is defined as in Eq.~(\ref{eq:wavevlet-conv}) that transforms $\vf^{(0)}$ into $\vf^{(K)}$ after $K$ layers. The top convolution layer $K$-th returns exactly $F_K = C$ features and uses softmax instead of the nonlinearity $\sigma$ for each node. The loss is the cross-entropy error over all labeled nodes as:
\begin{equation}
\mathcal{L} = - \sum_{v \in V_{\text{label}}} \sum_{c = 1}^C \vy_{v, c} \ln \vf^{(K)}_{v, c},
\label{eq:entropy-loss}
\end{equation}
where $\vy_{v, c}$ is a binary indicator that is equal to $1$ if node $v$ is labeled with class $c$, and $0$ otherwise. The set of weights $\{\vg^{(k)}\}_{k = 1}^K$ are trained using gradient descent optimizing the loss in Eq.~(\ref{eq:entropy-loss}).

Each document in Cora and Citeseer has an associated feature vector (of length  $1,433$ resp.~$3,703$)   
computed from word frequencies, and is classified into one of $7$ and $6$ classes, respectively. 
We factorize the normalized graph Laplacian by learnable MMF with $K = 16$ to obtain the wavelet bases. 
The resulting MMF wavelets are sparse, which makes it possible to run a fast transform 
on the node features by sparse matrix multiplication: only $4.69\%$ and $15.25\%$ of elements are non-zero in Citeseer and Cora, 
respectively. 
In constrast, Fourier bases given by eigendecomposition of the graph Laplacian are completely dense 
($100\%$ of elements are non-zero). 
We evaluate our WNNs with 3 different random splits of train/validation/test: (1) $20\%$/$20\%$/$60\%$ 
denoted as MMF$_1$, (2) $40\%$/$20\%$/$40\%$ denoted as MMF$_2$, and (3) $60\%$/$20\%$/$20\%$ denoted as MMF$_3$. 
The WNN learns to encode the whole graph with $6$ layers of spectral convolution and $100$ hidden dimensions 
for each node. During training, the network is only trained to predict the node labels in the training set. 
Hyperparameter searching is done on the validation set. 
The number of epochs is $256$ and we use the Adam optimization method \cite{Kingma2015} 
with learning rate $\eta = 10^{-3}$. 
We report the final test accuracy for each split in Table \ref{tbl:node-classification}.

We compare with several traditional methods and deep learning methods including other spectral graph convolution networks such as Spectral CNN, and graph wavelet neural networks (GWNN). Baseline results are taken from \cite{xu2018graph}. Our wavelet networks perform competitively against state-of-the-art methods in the field.

\begin{table}[h]
\begin{center}
\begin{tabular}{||l | c | c ||}
\hline
\textbf{Method} & \textbf{Cora} & \textbf{Citeseer} \\
\hline
MLP & 55.1\% & 46.5\% \\
ManiReg {\cite{JMLR:v7:belkin06a}} & 59.5\% & 60.1\% \\
SemiEmb {\cite{10.1145/1390156.1390303}} & 59.0\% & 59.6\% \\
LP {\cite{10.5555/3041838.3041953}} & 68.0\% & 45.3\% \\
DeepWalk {\cite{10.1145/2623330.2623732}}& 67.2\% & 43.2\% \\
ICA {\cite{Getoor2005}} & 75.1\% & 69.1\% \\
Planetoid {\cite{10.5555/3045390.3045396}} & 75.7\% & 64.7\% \\
\hline
Spectral CNN {\cite{ae482107de73461787258f805cf8f4ed}} & 73.3\% & 58.9\% \\
ChebyNet {\cite{NIPS2016_04df4d43}} & 81.2\% & 69.8\% \\
GCN {\cite{Kipf:2016tc}} & 81.5\% & 70.3\% \\
MoNet {\cite{DBLP:journals/corr/MontiBMRSB16}} & 81.7\% & N/A \\
GWNN {\cite{xu2018graph}} & 82.8\% & 71.7\% \\
\hline
\textbf{MMF}$_1$ & \textbf{84.35\%} & 68.07\% \\
\textbf{MMF}$_2$ & \textbf{84.55\%} & \textbf{72.76\%} \\
\textbf{MMF}$_3$ & \textbf{87.59\%} & \textbf{72.90\%} \\
\hline
\end{tabular}
\end{center}
\caption{\label{tbl:node-classification} Node classification on citation graphs. Baseline results are taken from \cite{xu2018graph}.}
\end{table}

\subsection{Matrix factorization} \label{sec:approx-exp}

We evaluate the performance of our MMF learning algorithm in comparison with the original greedy algorithm \cite{pmlr-v32-kondor14} and the Nystr\"{o}m method \cite{pmlr-v28-gittens13} in the task of matrix factorization on 3 datasets: (i) normalized graph Laplacian of the Karate club network ($N = 34$, $E = 78$) \cite{Karate}; (ii) a Kronecker product matrix ($N = 512$), $\mathcal{K}_1^n$, of order $n = 9$, where $\mathcal{K}_1 = ((0, 1), (1, 1))$ is a $2 \times 2$ seed matrix \cite{JMLR:v11:leskovec10a}; and (iii) normalized graph Laplacian of a Cayley tree or Bethe lattice with coordination number $z = 4$ and $4$ levels of depth ($N = 161$). The rotation matrix size $K$ are $8$, $16$ and $8$ for Karate, Kronecker and Cayley, 
respectively. Meanwhile, the original greedy MMF is limited to $K = 2$ and implements an exhaustive search to find an optimal pair of indices for each rotation. For both versions of MMF, we drop $c = 1$ columns after each rotation, 
which results in a final core size of $d_L = N - c \times L$. 
The exception is for the Kronecker matrix ($N = 512$), our learning algorithm drops up to $8$ columns (for example, $L = 62$ and $c = 8$ results into $d_L = 16$) to make sure that the number of learnable parameters $L \times K^2$ is much smaller the matrix size $N^2$. 
Our learning algorithm compresses the Kronecker matrix down to $6-7\%$ of its original size. The details of efficient training reinforcement learning with the policy networks implemented by GNNs are included in the Appendix.

For the baseline of Nystr\"{o}m method, we randomly select, by uniform sampling without replacement, the same number $d_L$ columns $\mC$ from $\mA$ and take out $\mW$ as the corresponding $d_L \times d_L$ submatrix of $\mA$. The Nystr\"{o}m method approximates $\mA \approx \mC\mW^{\dagger}\mC^T$. We measure the approximation error in Frobenius norm. Figure \ref{fig:matrix} shows our MMF learning algorithm consistently outperforms the original greedy algorithm and the Nystr\"{o}m baseline given the same number of active columns, $d_L$. 

\begin{figure*}
\begin{center}
\includegraphics[scale=0.36]{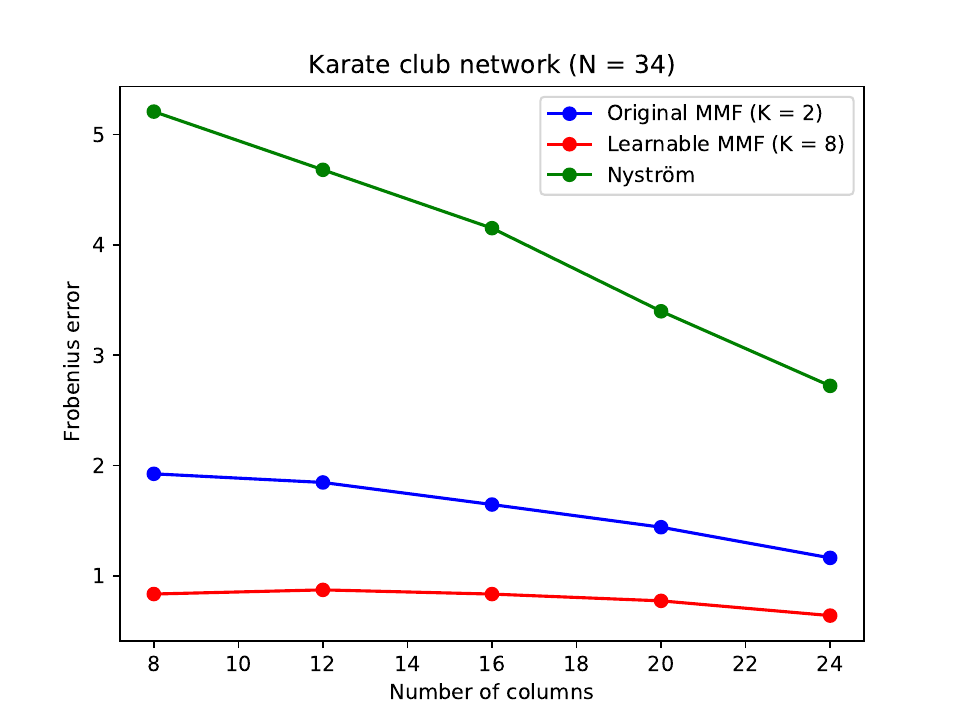} 
\includegraphics[scale=0.36]{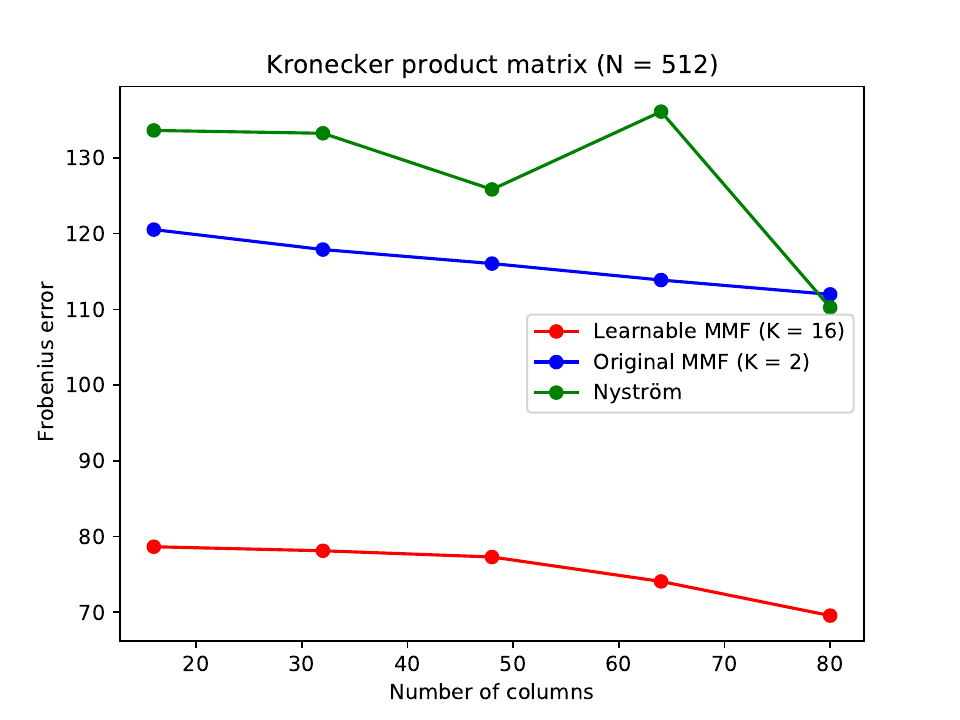} 
\includegraphics[scale=0.36]{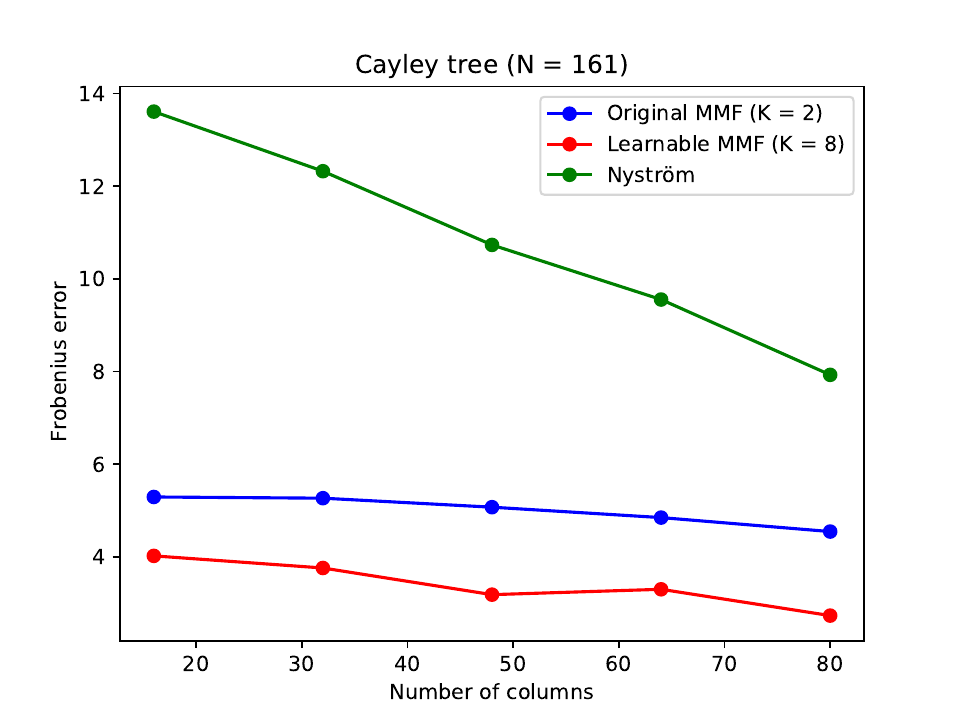}
\end{center}
\caption{\label{fig:matrix} Matrix factorization for the Karate network (left), Kronecker matrix (middle), and Cayley tree (right). 
Our learnable MMF consistently outperforms the classic greed methods.}
\end{figure*}

\section{Software}

We implemented our learning algorithm for MMF and the wavelet networks by PyTorch deep learning framework \citep{NEURIPS2019_bdbca288}. We released our implementation at 
\url{https://github.com/HySonLab/LearnMMF/}.

\section{Conclusions} \label{sec:Conclusion}

In this paper we introduced a general algorithm based on Stiefel manifold optimization and  evolutionary metaheuristics (e.g., Evolutionary Algorithm and Directed Evolution) to optimize Multiresolution Matrix Factorization (MMF). 
We find that the resulting learnable MMF consistently outperforms the existing greedy and heuristic 
MMF algorithms in factorizing and approximating hierarchical matrices. 
Based on the wavelet basis returned from our learning algorithm, we define a corresponding 
notion of spectral convolution 
and construct a wavelet neural network for graph learning problems. 
Thanks to the sparsity of the MMF wavelets, the wavelet network can be 
efficiently implemented with sparse matrix multiplication. 
We find that this combination of learnable MMF factorization and spectral wavelet network yields competitive results on standard node classification and molecular graph classification.


\bibliography{paper}


\begin{thebibliography}{68}
\ifx \bisbn   \undefined \def \bisbn  #1{ISBN #1}\fi
\ifx \binits  \undefined \def \binits#1{#1}\fi
\ifx \bauthor  \undefined \def \bauthor#1{#1}\fi
\ifx \batitle  \undefined \def \batitle#1{#1}\fi
\ifx \bjtitle  \undefined \def \bjtitle#1{#1}\fi
\ifx \bvolume  \undefined \def \bvolume#1{\textbf{#1}}\fi
\ifx \byear  \undefined \def \byear#1{#1}\fi
\ifx \bissue  \undefined \def \bissue#1{#1}\fi
\ifx \bfpage  \undefined \def \bfpage#1{#1}\fi
\ifx \blpage  \undefined \def \blpage #1{#1}\fi
\ifx \burl  \undefined \def \burl#1{\textsf{#1}}\fi
\ifx \doiurl  \undefined \def \doiurl#1{\url{https://doi.org/#1}}\fi
\ifx \betal  \undefined \def \betal{\textit{et al.}}\fi
\ifx \binstitute  \undefined \def \binstitute#1{#1}\fi
\ifx \binstitutionaled  \undefined \def \binstitutionaled#1{#1}\fi
\ifx \bctitle  \undefined \def \bctitle#1{#1}\fi
\ifx \beditor  \undefined \def \beditor#1{#1}\fi
\ifx \bpublisher  \undefined \def \bpublisher#1{#1}\fi
\ifx \bbtitle  \undefined \def \bbtitle#1{#1}\fi
\ifx \bedition  \undefined \def \bedition#1{#1}\fi
\ifx \bseriesno  \undefined \def \bseriesno#1{#1}\fi
\ifx \blocation  \undefined \def \blocation#1{#1}\fi
\ifx \bsertitle  \undefined \def \bsertitle#1{#1}\fi
\ifx \bsnm \undefined \def \bsnm#1{#1}\fi
\ifx \bsuffix \undefined \def \bsuffix#1{#1}\fi
\ifx \bparticle \undefined \def \bparticle#1{#1}\fi
\ifx \barticle \undefined \def \barticle#1{#1}\fi
\bibcommenthead
\ifx \bconfdate \undefined \def \bconfdate #1{#1}\fi
\ifx \botherref \undefined \def \botherref #1{#1}\fi
\ifx \url \undefined \def \url#1{\textsf{#1}}\fi
\ifx \bchapter \undefined \def \bchapter#1{#1}\fi
\ifx \bbook \undefined \def \bbook#1{#1}\fi
\ifx \bcomment \undefined \def \bcomment#1{#1}\fi
\ifx \oauthor \undefined \def \oauthor#1{#1}\fi
\ifx \citeauthoryear \undefined \def \citeauthoryear#1{#1}\fi
\ifx \endbibitem  \undefined \def \endbibitem {}\fi
\ifx \bconflocation  \undefined \def \bconflocation#1{#1}\fi
\ifx \arxivurl  \undefined \def \arxivurl#1{\textsf{#1}}\fi
\csname PreBibitemsHook\endcsname

\bibitem{ae482107de73461787258f805cf8f4ed}
\begin{bchapter}
\bauthor{\bsnm{Bruna}, \binits{J.}},
\bauthor{\bsnm{Zaremba}, \binits{W.}},
\bauthor{\bsnm{Szlam}, \binits{A.}},
\bauthor{\bsnm{Lecun}, \binits{Y.}}:
\bctitle{Spectral networks and locally connected networks on graphs}.
In: \bbtitle{International Conference on Learning Representations (ICLR2014),
  CBLS, April 2014}
(\byear{2014})
\end{bchapter}
\endbibitem

\bibitem{pmlr-v32-kondor14}
\begin{bchapter}
\bauthor{\bsnm{Kondor}, \binits{R.}},
\bauthor{\bsnm{Teneva}, \binits{N.}},
\bauthor{\bsnm{Garg}, \binits{V.}}:
\bctitle{Multiresolution matrix factorization}.
In: \beditor{\bsnm{Xing}, \binits{E.P.}},
\beditor{\bsnm{Jebara}, \binits{T.}} (eds.)
\bbtitle{Proceedings of the 31st International Conference on Machine Learning}.
\bsertitle{Proceedings of Machine Learning Research},
vol. \bseriesno{32},
pp. \bfpage{1620}--\blpage{1628}.
\bpublisher{PMLR},
\blocation{Bejing, China}
(\byear{2014}).
\burl{https://proceedings.mlr.press/v32/kondor14.html}
\end{bchapter}
\endbibitem

\bibitem{pmlr-v51-teneva16}
\begin{bchapter}
\bauthor{\bsnm{Teneva}, \binits{N.}},
\bauthor{\bsnm{Mudrakarta}, \binits{P.K.}},
\bauthor{\bsnm{Kondor}, \binits{R.}}:
\bctitle{Multiresolution matrix compression}.
In: \beditor{\bsnm{Gretton}, \binits{A.}},
\beditor{\bsnm{Robert}, \binits{C.C.}} (eds.)
\bbtitle{Proceedings of the 19th International Conference on Artificial
  Intelligence and Statistics}.
\bsertitle{Proceedings of Machine Learning Research},
vol. \bseriesno{51},
pp. \bfpage{1441}--\blpage{1449}.
\bpublisher{PMLR},
\blocation{Cadiz, Spain}
(\byear{2016}).
\burl{https://proceedings.mlr.press/v51/teneva16.html}
\end{bchapter}
\endbibitem

\bibitem{8099564}
\begin{bchapter}
\bauthor{\bsnm{Ithapu}, \binits{V.K.}},
\bauthor{\bsnm{Kondor}, \binits{R.}},
\bauthor{\bsnm{Johnson}, \binits{S.C.}},
\bauthor{\bsnm{Singh}, \binits{V.}}:
\bctitle{The incremental multiresolution matrix factorization algorithm}.
In: \bbtitle{2017 IEEE Conference on Computer Vision and Pattern Recognition
  (CVPR)},
pp. \bfpage{692}--\blpage{701}
(\byear{2017}).
\doiurl{10.1109/CVPR.2017.81}
\end{bchapter}
\endbibitem

\bibitem{ding2018multiresolution}
\begin{bchapter}
\bauthor{\bsnm{Ding}, \binits{Y.}},
\bauthor{\bsnm{Kondor}, \binits{R.}},
\bauthor{\bsnm{Eskreis-Winkler}, \binits{J.}}:
\bctitle{Multiresolution kernel approximation for gaussian process regression}.
In: \bbtitle{Proceedings of the 31st International Conference on Neural
  Information Processing Systems}.
\bsertitle{NIPS'17},
pp. \bfpage{3743}--\blpage{3751}.
\bpublisher{Curran Associates Inc.},
\blocation{Red Hook, NY, USA}
(\byear{2017})
\end{bchapter}
\endbibitem

\bibitem{pmlr-v196-hy22a}
\begin{bchapter}
\bauthor{\bsnm{Hy}, \binits{T.S.}},
\bauthor{\bsnm{Kondor}, \binits{R.}}:
\bctitle{Multiresolution matrix factorization and wavelet networks on graphs}.
In: \beditor{\bsnm{Cloninger}, \binits{A.}},
\beditor{\bsnm{Doster}, \binits{T.}},
\beditor{\bsnm{Emerson}, \binits{T.}},
\beditor{\bsnm{Kaul}, \binits{M.}},
\beditor{\bsnm{Ktena}, \binits{I.}},
\beditor{\bsnm{Kvinge}, \binits{H.}},
\beditor{\bsnm{Miolane}, \binits{N.}},
\beditor{\bsnm{Rieck}, \binits{B.}},
\beditor{\bsnm{Tymochko}, \binits{S.}},
\beditor{\bsnm{Wolf}, \binits{G.}} (eds.)
\bbtitle{Proceedings of Topological, Algebraic, and Geometric Learning
  Workshops 2022}.
\bsertitle{Proceedings of Machine Learning Research},
vol. \bseriesno{196},
pp. \bfpage{172}--\blpage{182}.
\bpublisher{PMLR}, \blocation{???}
(\byear{2022}).
\burl{https://proceedings.mlr.press/v196/hy22a.html}
\end{bchapter}
\endbibitem

\bibitem{10.1137/S0097539704442684}
\begin{barticle}
\bauthor{\bsnm{Drineas}, \binits{P.}},
\bauthor{\bsnm{Kannan}, \binits{R.}},
\bauthor{\bsnm{Mahoney}, \binits{M.W.}}:
\batitle{Fast monte carlo algorithms for matrices i: Approximating matrix
  multiplication}.
\bjtitle{SIAM J. Comput.}
\bvolume{36}(\bissue{1}),
\bfpage{132}--\blpage{157}
(\byear{2006}).
\doiurl{10.1137/S0097539704442684}
\end{barticle}
\endbibitem

\bibitem{Drineas2006FastMC}
\begin{barticle}
\bauthor{\bsnm{Drineas}, \binits{P.}},
\bauthor{\bsnm{Kannan}, \binits{R.}},
\bauthor{\bsnm{Mahoney}, \binits{M.W.}}:
\batitle{Fast monte carlo algorithms for matrices ii: Computing a low-rank
  approximation to a matrix}.
\bjtitle{SIAM J. Comput.}
\bvolume{36},
\bfpage{158}--\blpage{183}
(\byear{2006})
\end{barticle}
\endbibitem

\bibitem{10.1137/S0097539704442702}
\begin{barticle}
\bauthor{\bsnm{Drineas}, \binits{P.}},
\bauthor{\bsnm{Kannan}, \binits{R.}},
\bauthor{\bsnm{Mahoney}, \binits{M.W.}}:
\batitle{Fast monte carlo algorithms for matrices iii: Computing a compressed
  approximate matrix decomposition}.
\bjtitle{SIAM J. Comput.}
\bvolume{36}(\bissue{1}),
\bfpage{184}--\blpage{206}
(\byear{2006}).
\doiurl{10.1137/S0097539704442702}
\end{barticle}
\endbibitem

\bibitem{10.1145/1219092.1219097}
\begin{barticle}
\bauthor{\bsnm{Achlioptas}, \binits{D.}},
\bauthor{\bsnm{Mcsherry}, \binits{F.}}:
\batitle{Fast computation of low-rank matrix approximations}.
\bjtitle{J. ACM}
\bvolume{54}(\bissue{2}),
\bfpage{9}
(\byear{2007}).
\doiurl{10.1145/1219092.1219097}
\end{barticle}
\endbibitem

\bibitem{doi:10.1137/090771806}
\begin{barticle}
\bauthor{\bsnm{Halko}, \binits{N.}},
\bauthor{\bsnm{Martinsson}, \binits{P.G.}},
\bauthor{\bsnm{Tropp}, \binits{J.A.}}:
\batitle{Finding structure with randomness: Probabilistic algorithms for
  constructing approximate matrix decompositions}.
\bjtitle{SIAM Review}
\bvolume{53}(\bissue{2}),
\bfpage{217}--\blpage{288}
(\byear{2011})
{\href{https://arxiv.org/abs/https://doi.org/10.1137/090771806}{{https://doi.org/10.1137/090771806}}}.
\doiurl{10.1137/090771806}
\end{barticle}
\endbibitem

\bibitem{NIPS2000_19de10ad}
\begin{bchapter}
\bauthor{\bsnm{Williams}, \binits{C.K.I.}},
\bauthor{\bsnm{Seeger}, \binits{M.W.}}:
\bctitle{Using the nystr{\"o}m method to speed up kernel machines}.
In: \bbtitle{Neural Information Processing Systems}
(\byear{2000}).
\burl{https://api.semanticscholar.org/CorpusID:42041158}
\end{bchapter}
\endbibitem

\bibitem{pmlr-v5-kumar09a}
\begin{bchapter}
\bauthor{\bsnm{Kumar}, \binits{S.}},
\bauthor{\bsnm{Mohri}, \binits{M.}},
\bauthor{\bsnm{Talwalkar}, \binits{A.}}:
\bctitle{Sampling techniques for the nystrom method}.
In: \beditor{\bparticle{van} \bsnm{Dyk}, \binits{D.}},
\beditor{\bsnm{Welling}, \binits{M.}} (eds.)
\bbtitle{Proceedings of the Twelth International Conference on Artificial
  Intelligence and Statistics}.
\bsertitle{Proceedings of Machine Learning Research},
vol. \bseriesno{5},
pp. \bfpage{304}--\blpage{311}.
\bpublisher{PMLR},
\blocation{Hilton Clearwater Beach Resort, Clearwater Beach, Florida USA}
(\byear{2009}).
\burl{https://proceedings.mlr.press/v5/kumar09a.html}
\end{bchapter}
\endbibitem

\bibitem{JMLR:v13:kumar12a}
\begin{barticle}
\bauthor{\bsnm{Kumar}, \binits{S.}},
\bauthor{\bsnm{Mohri}, \binits{M.}},
\bauthor{\bsnm{Talwalkar}, \binits{A.}}:
\batitle{Sampling methods for the nystrom method}.
\bjtitle{Journal of Machine Learning Research}
\bvolume{13}(\bissue{34}),
\bfpage{981}--\blpage{1006}
(\byear{2012})
\end{barticle}
\endbibitem

\bibitem{10.1561/2200000035}
\begin{barticle}
\bauthor{\bsnm{Mahoney}, \binits{M.W.}}:
\batitle{Randomized algorithms for matrices and data}.
\bjtitle{Found. Trends Mach. Learn.}
\bvolume{3}(\bissue{2}),
\bfpage{123}--\blpage{224}
(\byear{2011}).
\doiurl{10.1561/2200000035}
\end{barticle}
\endbibitem

\bibitem{pmlr-v9-jenatton10a}
\begin{bchapter}
\bauthor{\bsnm{Jenatton}, \binits{R.}},
\bauthor{\bsnm{Obozinski}, \binits{G.}},
\bauthor{\bsnm{Bach}, \binits{F.}}:
\bctitle{Structured sparse principal component analysis}.
In: \beditor{\bsnm{Teh}, \binits{Y.W.}},
\beditor{\bsnm{Titterington}, \binits{M.}} (eds.)
\bbtitle{Proceedings of the Thirteenth International Conference on Artificial
  Intelligence and Statistics}.
\bsertitle{Proceedings of Machine Learning Research},
vol. \bseriesno{9},
pp. \bfpage{366}--\blpage{373}.
\bpublisher{PMLR},
\blocation{Chia Laguna Resort, Sardinia, Italy}
(\byear{2010}).
\burl{https://proceedings.mlr.press/v9/jenatton10a.html}
\end{bchapter}
\endbibitem

\bibitem{COIFMAN200653}
\begin{barticle}
\bauthor{\bsnm{Coifman}, \binits{R.R.}},
\bauthor{\bsnm{Maggioni}, \binits{M.}}:
\batitle{Diffusion wavelets}.
\bjtitle{Applied and Computational Harmonic Analysis}
\bvolume{21}(\bissue{1}),
\bfpage{53}--\blpage{94}
(\byear{2006}).
\doiurl{10.1016/j.acha.2006.04.004}.
\bcomment{Special Issue: Diffusion Maps and Wavelets}
\end{barticle}
\endbibitem

\bibitem{HAMMOND2011129}
\begin{barticle}
\bauthor{\bsnm{Hammond}, \binits{D.K.}},
\bauthor{\bsnm{Vandergheynst}, \binits{P.}},
\bauthor{\bsnm{Gribonval}, \binits{R.}}:
\batitle{Wavelets on graphs via spectral graph theory}.
\bjtitle{Applied and Computational Harmonic Analysis}
\bvolume{30}(\bissue{2}),
\bfpage{129}--\blpage{150}
(\byear{2011}).
\doiurl{10.1016/j.acha.2010.04.005}
\end{barticle}
\endbibitem

\bibitem{10.5555/3104322.3104370}
\begin{bchapter}
\bauthor{\bsnm{Gavish}, \binits{M.}},
\bauthor{\bsnm{Nadler}, \binits{B.}},
\bauthor{\bsnm{Coifman}, \binits{R.R.}}:
\bctitle{Multiscale wavelets on trees, graphs and high dimensional data: Theory
  and applications to semi supervised learning}.
In: \bbtitle{Proceedings of the 27th International Conference on International
  Conference on Machine Learning}.
\bsertitle{ICML'10},
pp. \bfpage{367}--\blpage{374}.
\bpublisher{Omnipress},
\blocation{Madison, WI, USA}
(\byear{2010})
\end{bchapter}
\endbibitem

\bibitem{10.1214/07-AOAS137}
\begin{barticle}
\bauthor{\bsnm{Lee}, \binits{A.B.}},
\bauthor{\bsnm{Nadler}, \binits{B.}},
\bauthor{\bsnm{Wasserman}, \binits{L.}}:
\batitle{{Treelets—An adaptive multi-scale basis for sparse unordered data}}.
\bjtitle{The Annals of Applied Statistics}
\bvolume{2}(\bissue{2}),
\bfpage{435}--\blpage{471}
(\byear{2008}).
\doiurl{10.1214/07-AOAS137}
\end{barticle}
\endbibitem

\bibitem{HyEtAl2018}
\begin{botherref}
\oauthor{\bsnm{Hy}, \binits{T.S.}},
\oauthor{\bsnm{Trivedi}, \binits{S.}},
\oauthor{\bsnm{Pan}, \binits{H.}},
\oauthor{\bsnm{Anderson}, \binits{B.M.}},
\oauthor{},
\oauthor{\bsnm{Kondor}, \binits{R.}}:
Predicting molecular properties with covariant compositional networks.
The Journal of Chemical Physics
\textbf{148}
(2018)
\end{botherref}
\endbibitem

\bibitem{pmlr-v70-gilmer17a}
\begin{bchapter}
\bauthor{\bsnm{Gilmer}, \binits{J.}},
\bauthor{\bsnm{Schoenholz}, \binits{S.S.}},
\bauthor{\bsnm{Riley}, \binits{P.F.}},
\bauthor{\bsnm{Vinyals}, \binits{O.}},
\bauthor{\bsnm{Dahl}, \binits{G.E.}}:
\bctitle{Neural message passing for quantum chemistry}.
In: \bbtitle{International Conference on Machine Learning}
(\byear{2017}).
\burl{https://api.semanticscholar.org/CorpusID:9665943}
\end{bchapter}
\endbibitem

\bibitem{NIPS2016_3147da8a}
\begin{bchapter}
\bauthor{\bsnm{Battaglia}, \binits{P.}},
\bauthor{\bsnm{Pascanu}, \binits{R.}},
\bauthor{\bsnm{Lai}, \binits{M.}},
\bauthor{\bsnm{Rezende}, \binits{D.J.}},
\bauthor{\bsnm{kavukcuoglu}, \binits{K.}}:
\bctitle{Interaction networks for learning about objects, relations and
  physics}.
In: \bbtitle{Proceedings of the 30th International Conference on Neural
  Information Processing Systems}.
\bsertitle{NIPS'16},
pp. \bfpage{4509}--\blpage{4517}.
\bpublisher{Curran Associates Inc.},
\blocation{Red Hook, NY, USA}
(\byear{2016})
\end{bchapter}
\endbibitem

\bibitem{pmlr-v184-hy22a}
\begin{bchapter}
\bauthor{\bsnm{Hy}, \binits{T.S.}},
\bauthor{\bsnm{Nguyen}, \binits{V.B.}},
\bauthor{\bsnm{Tran-Thanh}, \binits{L.}},
\bauthor{\bsnm{Kondor}, \binits{R.}}:
\bctitle{Temporal multiresolution graph neural networks for epidemic
  prediction}.
In: \beditor{\bsnm{Xu}, \binits{P.}},
\beditor{\bsnm{Zhu}, \binits{T.}},
\beditor{\bsnm{Zhu}, \binits{P.}},
\beditor{\bsnm{Clifton}, \binits{D.A.}},
\beditor{\bsnm{Belgrave}, \binits{D.}},
\beditor{\bsnm{Zhang}, \binits{Y.}} (eds.)
\bbtitle{Proceedings of the 1st Workshop on Healthcare AI and COVID-19, ICML
  2022}.
\bsertitle{Proceedings of Machine Learning Research},
vol. \bseriesno{184},
pp. \bfpage{21}--\blpage{32}.
\bpublisher{PMLR}, \blocation{???}
(\byear{2022}).
\burl{https://proceedings.mlr.press/v184/hy22a.html}
\end{bchapter}
\endbibitem

\bibitem{nguyen2023predicting}
\begin{bchapter}
\bauthor{\bsnm{Nguyen}, \binits{B.}},
\bauthor{\bsnm{Hy}, \binits{T.S.}},
\bauthor{\bsnm{Tran-Thanh}, \binits{L.}},
\bauthor{\bsnm{Nghiem}, \binits{N.}}:
\bctitle{Predicting {COVID}-19 pandemic by spatio-temporal graph neural
  networks: A new zealand's study}.
In: \bbtitle{Temporal Graph Learning Workshop @ NeurIPS 2023}
(\byear{2023}).
\burl{https://openreview.net/forum?id=tkjGiKs2g6}
\end{bchapter}
\endbibitem

\bibitem{DBLP:journals/cgf/BoscainiMMBCV15}
\begin{barticle}
\bauthor{\bsnm{Boscaini}, \binits{D.}},
\bauthor{\bsnm{Masci}, \binits{J.}},
\bauthor{\bsnm{Melzi}, \binits{S.}},
\bauthor{\bsnm{Bronstein}, \binits{M.M.}},
\bauthor{\bsnm{Castellani}, \binits{U.}},
\bauthor{\bsnm{Vandergheynst}, \binits{P.}}:
\batitle{Learning class-specific descriptors for deformable shapes using
  localized spectral convolutional networks}.
\bjtitle{Comput. Graph. Forum}
\bvolume{34}(\bissue{5}),
\bfpage{13}--\blpage{23}
(\byear{2015}).
\doiurl{10.1111/CGF.12693}
\end{barticle}
\endbibitem

\bibitem{10.1145/3137609}
\begin{botherref}
\oauthor{\bsnm{Huang}, \binits{H.}},
\oauthor{\bsnm{Kalogerakis}, \binits{E.}},
\oauthor{\bsnm{Chaudhuri}, \binits{S.}},
\oauthor{\bsnm{Ceylan}, \binits{D.}},
\oauthor{\bsnm{Kim}, \binits{V.G.}},
\oauthor{\bsnm{Yumer}, \binits{E.}}:
Learning local shape descriptors from part correspondences with multiview
  convolutional networks.
ACM Trans. Graph.
\textbf{37}(1)
(2017).
\doiurl{10.1145/3137609}
\end{botherref}
\endbibitem

\bibitem{Defferrard16}
\begin{bchapter}
\bauthor{\bsnm{Defferrard}, \binits{M.}},
\bauthor{\bsnm{Bresson}, \binits{X.}},
\bauthor{\bsnm{Vandergheynst}, \binits{P.}}:
\bctitle{Convolutional neural networks on graphs with fast localized spectral
  filtering}.
In: \bbtitle{Proceedings of the 30th International Conference on Neural
  Information Processing Systems}.
\bsertitle{NIPS'16},
pp. \bfpage{3844}--\blpage{3852}.
\bpublisher{Curran Associates Inc.},
\blocation{Red Hook, NY, USA}
(\byear{2016})
\end{bchapter}
\endbibitem

\bibitem{hammond:inria-00541855}
\begin{barticle}
\bauthor{\bsnm{Hammond}, \binits{D.K.}},
\bauthor{\bsnm{Vandergheynst}, \binits{P.}},
\bauthor{\bsnm{Gribonval}, \binits{R.}}:
\batitle{{Wavelets on graphs via spectral graph theory}}.
\bjtitle{{Applied and Computational Harmonic Analysis}}
\bvolume{30}(\bissue{2}),
\bfpage{129}--\blpage{150}
(\byear{2011}).
\doiurl{10.1016/j.acha.2010.04.005}
\end{barticle}
\endbibitem

\bibitem{xu2018graph}
\begin{bchapter}
\bauthor{\bsnm{Xu}, \binits{B.}},
\bauthor{\bsnm{Shen}, \binits{H.}},
\bauthor{\bsnm{Cao}, \binits{Q.}},
\bauthor{\bsnm{Qiu}, \binits{Y.}},
\bauthor{\bsnm{Cheng}, \binits{X.}}:
\bctitle{Graph wavelet neural network}.
In: \bbtitle{International Conference on Learning Representations}
(\byear{2019})
\end{bchapter}
\endbibitem

\bibitem{192463}
\begin{barticle}
\bauthor{\bsnm{Mallat}, \binits{S.G.}}:
\batitle{A theory for multiresolution signal decomposition: the wavelet
  representation}.
\bjtitle{IEEE Transactions on Pattern Analysis and Machine Intelligence}
\bvolume{11}(\bissue{7}),
\bfpage{674}--\blpage{693}
(\byear{1989}).
\doiurl{10.1109/34.192463}
\end{barticle}
\endbibitem

\bibitem{edelman1998geometry}
\begin{barticle}
\bauthor{\bsnm{Edelman}, \binits{A.}},
\bauthor{\bsnm{Arias}, \binits{T.A.}},
\bauthor{\bsnm{Smith}, \binits{S.T.}}:
\batitle{The geometry of algorithms with orthogonality constraints}.
\bjtitle{SIAM Journal on Matrix Analysis and Applications}
\bvolume{20}(\bissue{2}),
\bfpage{303}--\blpage{353}
(\byear{1998}).
\doiurl{10.1137/S0895479895290954}
\end{barticle}
\endbibitem

\bibitem{MUHLENBEIN198865}
\begin{barticle}
\bauthor{\bsnm{Mühlenbein}, \binits{H.}},
\bauthor{\bsnm{Gorges-Schleuter}, \binits{M.}},
\bauthor{\bsnm{Krämer}, \binits{O.}}:
\batitle{Evolution algorithms in combinatorial optimization}.
\bjtitle{Parallel Computing}
\bvolume{7}(\bissue{1}),
\bfpage{65}--\blpage{85}
(\byear{1988}).
\doiurl{10.1016/0167-8191(88)90098-1}
\end{barticle}
\endbibitem

\bibitem{doi:10.1021/ar960017f}
\begin{barticle}
\bauthor{\bsnm{Arnold}, \binits{F.H.}}:
\batitle{Design by directed evolution}.
\bjtitle{Accounts of Chemical Research}
\bvolume{31}(\bissue{3}),
\bfpage{125}--\blpage{131}
(\byear{1998})
{\href{https://arxiv.org/abs/https://doi.org/10.1021/ar960017f}{{https://doi.org/10.1021/ar960017f}}}.
\doiurl{10.1021/ar960017f}
\end{barticle}
\endbibitem

\bibitem{https://doi.org/10.1002/anie.201708408}
\begin{barticle}
\bauthor{\bsnm{Arnold}, \binits{F.H.}}:
\batitle{Directed evolution: Bringing new chemistry to life}.
\bjtitle{Angewandte Chemie International Edition}
\bvolume{57}(\bissue{16}),
\bfpage{4143}--\blpage{4148}
(\byear{2018})
{\href{https://arxiv.org/abs/https://onlinelibrary.wiley.com/doi/pdf/10.1002/anie.201708408}{{https://onlinelibrary.wiley.com/doi/pdf/10.1002/anie.201708408}}}.
\doiurl{10.1002/anie.201708408}
\end{barticle}
\endbibitem

\bibitem{Romero2009ExploringPF}
\begin{barticle}
\bauthor{\bsnm{Romero}, \binits{P.A.}},
\bauthor{\bsnm{Arnold}, \binits{F.H.}}:
\batitle{Exploring protein fitness landscapes by directed evolution}.
\bjtitle{Nature Reviews Molecular Cell Biology}
\bvolume{10},
\bfpage{866}--\blpage{876}
(\byear{2009})
\end{barticle}
\endbibitem

\bibitem{6494675}
\begin{barticle}
\bauthor{\bsnm{Shuman}, \binits{D.I.}},
\bauthor{\bsnm{Narang}, \binits{S.K.}},
\bauthor{\bsnm{Frossard}, \binits{P.}},
\bauthor{\bsnm{Ortega}, \binits{A.}},
\bauthor{\bsnm{Vandergheynst}, \binits{P.}}:
\batitle{The emerging field of signal processing on graphs: Extending
  high-dimensional data analysis to networks and other irregular domains}.
\bjtitle{IEEE Signal Processing Magazine}
\bvolume{30}(\bissue{3}),
\bfpage{83}--\blpage{98}
(\byear{2013}).
\doiurl{10.1109/MSP.2012.2235192}
\end{barticle}
\endbibitem

\bibitem{NIPS2016_04df4d43}
\begin{bchapter}
\bauthor{\bsnm{Defferrard}, \binits{M.}},
\bauthor{\bsnm{Bresson}, \binits{X.}},
\bauthor{\bsnm{Vandergheynst}, \binits{P.}}:
\bctitle{Convolutional neural networks on graphs with fast localized spectral
  filtering}.
In: \bbtitle{Proceedings of the 30th International Conference on Neural
  Information Processing Systems}.
\bsertitle{NIPS'16},
pp. \bfpage{3844}--\blpage{3852}.
\bpublisher{Curran Associates Inc.},
\blocation{Red Hook, NY, USA}
(\byear{2016})
\end{bchapter}
\endbibitem

\bibitem{Zhang2018AnED}
\begin{bchapter}
\bauthor{\bsnm{Zhang}, \binits{M.}},
\bauthor{\bsnm{Cui}, \binits{Z.}},
\bauthor{\bsnm{Neumann}, \binits{M.}},
\bauthor{\bsnm{Chen}, \binits{Y.}}:
\bctitle{An end-to-end deep learning architecture for graph classification}.
In: \bbtitle{AAAI}
(\byear{2018})
\end{bchapter}
\endbibitem

\bibitem{Niepert2016}
\begin{bchapter}
\bauthor{\bsnm{Niepert}, \binits{M.}},
\bauthor{\bsnm{Ahmed}, \binits{M.}},
\bauthor{\bsnm{Kutzkov}, \binits{K.}}:
\bctitle{Learning convolutional neural networks for graphs}.
In: \bbtitle{Proceedings of The 33rd International Conference on Machine
  Learning}.
\bsertitle{Proceedings of Machine Learning Research},
vol. \bseriesno{48},
pp. \bfpage{2014}--\blpage{2023}.
\bpublisher{PMLR},
\blocation{New York, New York, USA}
(\byear{2016})
\end{bchapter}
\endbibitem

\bibitem{10.5555/3157096.3157320}
\begin{bchapter}
\bauthor{\bsnm{Atwood}, \binits{J.}},
\bauthor{\bsnm{Towsley}, \binits{D.}}:
\bctitle{Diffusion-convolutional neural networks}.
In: \bbtitle{Proceedings of the 30th International Conference on Neural
  Information Processing Systems}.
\bsertitle{NIPS'16},
pp. \bfpage{2001}--\blpage{2009}.
\bpublisher{Curran Associates Inc.},
\blocation{Red Hook, NY, USA}
(\byear{2016})
\end{bchapter}
\endbibitem

\bibitem{pmlr-v5-shervashidze09a}
\begin{bchapter}
\bauthor{\bsnm{Shervashidze}, \binits{N.}},
\bauthor{\bsnm{Vishwanathan}, \binits{S.}},
\bauthor{\bsnm{Petri}, \binits{T.}},
\bauthor{\bsnm{Mehlhorn}, \binits{K.}},
\bauthor{\bsnm{Borgwardt}, \binits{K.}}:
\bctitle{Efficient graphlet kernels for large graph comparison}.
In: \beditor{\bparticle{van} \bsnm{Dyk}, \binits{D.}},
\beditor{\bsnm{Welling}, \binits{M.}} (eds.)
\bbtitle{Proceedings of the Twelth International Conference on Artificial
  Intelligence and Statistics}.
\bsertitle{Proceedings of Machine Learning Research},
vol. \bseriesno{5},
pp. \bfpage{488}--\blpage{495}.
\bpublisher{PMLR},
\blocation{Hilton Clearwater Beach Resort, Clearwater Beach, Florida USA}
(\byear{2009}).
\burl{https://proceedings.mlr.press/v5/shervashidze09a.html}
\end{bchapter}
\endbibitem

\bibitem{10.5555/1756006.1859891}
\begin{barticle}
\bauthor{\bsnm{Vishwanathan}, \binits{S.V.N.}},
\bauthor{\bsnm{Schraudolph}, \binits{N.N.}},
\bauthor{\bsnm{Kondor}, \binits{R.}},
\bauthor{\bsnm{Borgwardt}, \binits{K.M.}}:
\batitle{Graph kernels}.
\bjtitle{J. Mach. Learn. Res.}
\bvolume{11},
\bfpage{1201}--\blpage{1242}
(\byear{2010})
\end{barticle}
\endbibitem

\bibitem{Neumann2016}
\begin{barticle}
\bauthor{\bsnm{Neumann}, \binits{M.}},
\bauthor{\bsnm{Garnett}, \binits{R.}},
\bauthor{\bsnm{Baukhage}, \binits{C.}},
\bauthor{\bsnm{Kersting}, \binits{K.}}:
\batitle{Propagation kernels: Efficient graph kernels from propagated
  information}.
\bjtitle{Machine Learning}
\bvolume{102},
\bfpage{209}--\blpage{245}
(\byear{2016})
\end{barticle}
\endbibitem

\bibitem{JMLR:v12:shervashidze11a}
\begin{barticle}
\bauthor{\bsnm{Shervashidze}, \binits{N.}},
\bauthor{\bsnm{Schweitzer}, \binits{P.}},
\bauthor{\bparticle{van} \bsnm{Leeuwen}, \binits{E.J.}},
\bauthor{\bsnm{Mehlhorn}, \binits{K.}},
\bauthor{\bsnm{Borgwardt}, \binits{K.M.}}:
\batitle{Weisfeiler-lehman graph kernels}.
\bjtitle{Journal of Machine Learning Research}
\bvolume{12}(\bissue{77}),
\bfpage{2539}--\blpage{2561}
(\byear{2011})
\end{barticle}
\endbibitem

\bibitem{maron2018invariant}
\begin{bchapter}
\bauthor{\bsnm{Maron}, \binits{H.}},
\bauthor{\bsnm{Ben-Hamu}, \binits{H.}},
\bauthor{\bsnm{Shamir}, \binits{N.}},
\bauthor{\bsnm{Lipman}, \binits{Y.}}:
\bctitle{Invariant and equivariant graph networks}.
In: \bbtitle{International Conference on Learning Representations}
(\byear{2019}).
\burl{https://openreview.net/forum?id=Syx72jC9tm}
\end{bchapter}
\endbibitem

\bibitem{doi:10.1021/jm00106a046}
\begin{barticle}
\bauthor{\bsnm{Debnath}, \binits{A.K.}},
\bauthor{\bparticle{Lopez~de} \bsnm{Compadre}, \binits{R.L.}},
\bauthor{\bsnm{Debnath}, \binits{G.}},
\bauthor{\bsnm{Shusterman}, \binits{A.J.}},
\bauthor{\bsnm{Hansch}, \binits{C.}}:
\batitle{Structure-activity relationship of mutagenic aromatic and
  heteroaromatic nitro compounds. correlation with molecular orbital energies
  and hydrophobicity}.
\bjtitle{Journal of Medicinal Chemistry}
\bvolume{34}(\bissue{2}),
\bfpage{786}--\blpage{797}
(\byear{1991})
{\href{https://arxiv.org/abs/https://doi.org/10.1021/jm00106a046}{{https://doi.org/10.1021/jm00106a046}}}.
\doiurl{10.1021/jm00106a046}
\end{barticle}
\endbibitem

\bibitem{10.1093/bioinformatics/btg130}
\begin{barticle}
\bauthor{\bsnm{Toivonen}, \binits{H.}},
\bauthor{\bsnm{Srinivasan}, \binits{A.}},
\bauthor{\bsnm{King}, \binits{R.D.}},
\bauthor{\bsnm{Kramer}, \binits{S.}},
\bauthor{\bsnm{Helma}, \binits{C.}}:
\batitle{{Statistical evaluation of the Predictive Toxicology Challenge
  2000–2001}}.
\bjtitle{Bioinformatics}
\bvolume{19}(\bissue{10}),
\bfpage{1183}--\blpage{1193}
(\byear{2003})
{\href{https://arxiv.org/abs/https://academic.oup.com/bioinformatics/article-pdf/19/10/1183/448860/btg130.pdf}{{https://academic.oup.com/bioinformatics/article-pdf/19/10/1183/448860/btg130.pdf}}}.
\doiurl{10.1093/bioinformatics/btg130}
\end{barticle}
\endbibitem

\bibitem{ProteinKernel}
\begin{barticle}
\bauthor{\bsnm{Borgwardt}, \binits{K.M.}},
\bauthor{\bsnm{Ong}, \binits{C.S.}},
\bauthor{\bsnm{Sch{\"o}nauer}, \binits{S.}},
\bauthor{\bsnm{Vishwanathan}, \binits{S.V.N.}},
\bauthor{\bsnm{Smola}, \binits{A.}},
\bauthor{\bsnm{Kriegel}, \binits{H.-P.}}:
\batitle{Protein function prediction via graph kernels}.
\bjtitle{Bioinformatics}
\bvolume{21 Suppl 1},
\bfpage{47}--\blpage{56}
(\byear{2005})
\end{barticle}
\endbibitem

\bibitem{NCIDataset}
\begin{barticle}
\bauthor{\bsnm{Wale}, \binits{N.}},
\bauthor{\bsnm{Watson}, \binits{I.}},
\bauthor{\bsnm{Karypis}, \binits{G.}}:
\batitle{Comparison of descriptor spaces for chemical compound retrieval and
  classification}.
\bjtitle{Knowl. Inf. Syst.}
\bvolume{14},
\bfpage{347}--\blpage{375}
(\byear{2008}).
\doiurl{10.1109/ICDM.2006.39}
\end{barticle}
\endbibitem

\bibitem{PSen2008}
\begin{barticle}
\bauthor{\bsnm{Sen}, \binits{P.}},
\bauthor{\bsnm{Namata}, \binits{G.M.}},
\bauthor{\bsnm{Bilgic}, \binits{M.}},
\bauthor{\bsnm{Getoor}, \binits{L.}},
\bauthor{\bsnm{Gallagher}, \binits{B.}},
\bauthor{},
\bauthor{\bsnm{Eliassi-Rad}, \binits{T.}}:
\batitle{Collective classification in network data}.
\bjtitle{AI Magazine}
\bvolume{29(3)},
\bfpage{93}--\blpage{106}
(\byear{2008})
\end{barticle}
\endbibitem

\bibitem{Kingma2015}
\begin{bchapter}
\bauthor{\bsnm{Kingma}, \binits{D.P.}},
\bauthor{\bsnm{Ba}, \binits{J.}}:
\bctitle{Adam: A method for stochastic optimization}.
In: \bbtitle{Proc. ICLR},
\bconflocation{San Diego}
(\byear{2015})
\end{bchapter}
\endbibitem

\bibitem{JMLR:v7:belkin06a}
\begin{barticle}
\bauthor{\bsnm{Belkin}, \binits{M.}},
\bauthor{\bsnm{Niyogi}, \binits{P.}},
\bauthor{\bsnm{Sindhwani}, \binits{V.}}:
\batitle{Manifold regularization: A geometric framework for learning from
  labeled and unlabeled examples}.
\bjtitle{Journal of Machine Learning Research}
\bvolume{7}(\bissue{85}),
\bfpage{2399}--\blpage{2434}
(\byear{2006})
\end{barticle}
\endbibitem

\bibitem{10.1145/1390156.1390303}
\begin{bchapter}
\bauthor{\bsnm{Weston}, \binits{J.}},
\bauthor{\bsnm{Ratle}, \binits{F.}},
\bauthor{\bsnm{Collobert}, \binits{R.}}:
\bctitle{Deep learning via semi-supervised embedding}.
In: \bbtitle{Proceedings of the 25th International Conference on Machine
  Learning}.
\bsertitle{ICML '08},
pp. \bfpage{1168}--\blpage{1175}.
\bpublisher{Association for Computing Machinery},
\blocation{New York, NY, USA}
(\byear{2008}).
\doiurl{10.1145/1390156.1390303}.
\burl{https://doi.org/10.1145/1390156.1390303}
\end{bchapter}
\endbibitem

\bibitem{10.5555/3041838.3041953}
\begin{bchapter}
\bauthor{\bsnm{Zhu}, \binits{X.}},
\bauthor{\bsnm{Ghahramani}, \binits{Z.}},
\bauthor{\bsnm{Lafferty}, \binits{J.}}:
\bctitle{Semi-supervised learning using gaussian fields and harmonic
  functions}.
In: \bbtitle{ICML},
pp. \bfpage{912}--\blpage{919}
(\byear{2003})
\end{bchapter}
\endbibitem

\bibitem{10.1145/2623330.2623732}
\begin{bchapter}
\bauthor{\bsnm{Perozzi}, \binits{B.}},
\bauthor{\bsnm{Al-Rfou}, \binits{R.}},
\bauthor{\bsnm{Skiena}, \binits{S.}}:
\bctitle{Deepwalk: Online learning of social representations}.
In: \bbtitle{Proceedings of the 20th ACM SIGKDD International Conference on
  Knowledge Discovery and Data Mining}.
\bsertitle{KDD '14},
pp. \bfpage{701}--\blpage{710}.
\bpublisher{Association for Computing Machinery},
\blocation{New York, NY, USA}
(\byear{2014}).
\doiurl{10.1145/2623330.2623732}.
\burl{https://doi.org/10.1145/2623330.2623732}
\end{bchapter}
\endbibitem

\bibitem{Getoor2005}
\begin{bbook}
\bauthor{\bsnm{Getoor}, \binits{L.}}:
\bbtitle{Link-based Classification},
pp. \bfpage{189}--\blpage{207}.
\bpublisher{Springer},
\blocation{London}
(\byear{2005}).
\doiurl{10.1007/1-84628-284-5_7}.
\burl{https://doi.org/10.1007/1-84628-284-5_7}
\end{bbook}
\endbibitem

\bibitem{10.5555/3045390.3045396}
\begin{botherref}
\oauthor{\bsnm{Yang}, \binits{Z.}},
\oauthor{\bsnm{Cohen}, \binits{W.}},
\oauthor{\bsnm{Salakhudinov}, \binits{R.}}:
Revisiting semi-supervised learning with graph embeddings.
Proceedings of the 33rd International Conference on MachineLearning
(2016)
\end{botherref}
\endbibitem

\bibitem{Kipf:2016tc}
\begin{bchapter}
\bauthor{\bsnm{Kipf}, \binits{T.N.}},
\bauthor{\bsnm{Welling}, \binits{M.}}:
\bctitle{{Semi-Supervised Classification with Graph Convolutional Networks}}.
In: \bbtitle{Proceedings of the 5th International Conference on Learning
  Representations}.
\bsertitle{ICLR '17}
(\byear{2017}).
\burl{https://openreview.net/forum?id=SJU4ayYgl}
\end{bchapter}
\endbibitem

\bibitem{DBLP:journals/corr/MontiBMRSB16}
\begin{bchapter}
\bauthor{\bsnm{Monti}, \binits{F.}},
\bauthor{\bsnm{Boscaini}, \binits{D.}},
\bauthor{\bsnm{Masci}, \binits{J.}},
\bauthor{\bsnm{Rodola}, \binits{E.}},
\bauthor{\bsnm{Svoboda}, \binits{J.}},
\bauthor{\bsnm{Bronstein}, \binits{M.M.}}:
\bctitle{Geometric deep learning on graphs and manifolds using mixture model
  cnns}.
In: \bbtitle{2017 IEEE Conference on Computer Vision and Pattern Recognition
  (CVPR)},
pp. \bfpage{5425}--\blpage{5434}.
\bpublisher{IEEE Computer Society},
\blocation{Los Alamitos, CA, USA}
(\byear{2017}).
\doiurl{10.1109/CVPR.2017.576}
\end{bchapter}
\endbibitem

\bibitem{pmlr-v28-gittens13}
\begin{bchapter}
\bauthor{\bsnm{Gittens}, \binits{A.}},
\bauthor{\bsnm{Mahoney}, \binits{M.}}:
\bctitle{Revisiting the nystrom method for improved large-scale machine
  learning}.
In: \beditor{\bsnm{Dasgupta}, \binits{S.}},
\beditor{\bsnm{McAllester}, \binits{D.}} (eds.)
\bbtitle{Proceedings of the 30th International Conference on Machine Learning}.
\bsertitle{Proceedings of Machine Learning Research},
vol. \bseriesno{28},
pp. \bfpage{567}--\blpage{575}.
\bpublisher{PMLR},
\blocation{Atlanta, Georgia, USA}
(\byear{2013}).
\burl{https://proceedings.mlr.press/v28/gittens13.html}
\end{bchapter}
\endbibitem

\bibitem{Karate}
\begin{botherref}
\oauthor{\bsnm{Zachary}, \binits{W.}}:
An information flow model for conflict and fission in small groups1.
Journal of anthropological research
\textbf{33}
(1976).
\doiurl{10.1086/jar.33.4.3629752}
\end{botherref}
\endbibitem

\bibitem{JMLR:v11:leskovec10a}
\begin{barticle}
\bauthor{\bsnm{Leskovec}, \binits{J.}},
\bauthor{\bsnm{Chakrabarti}, \binits{D.}},
\bauthor{\bsnm{Kleinberg}, \binits{J.}},
\bauthor{\bsnm{Faloutsos}, \binits{C.}},
\bauthor{\bsnm{Ghahramani}, \binits{Z.}}:
\batitle{Kronecker graphs: An approach to modeling networks}.
\bjtitle{Journal of Machine Learning Research}
\bvolume{11}(\bissue{33}),
\bfpage{985}--\blpage{1042}
(\byear{2010})
\end{barticle}
\endbibitem

\bibitem{NEURIPS2019_bdbca288}
\begin{bbook}
\bauthor{\bsnm{Paszke}, \binits{A.}},
\bauthor{\bsnm{Gross}, \binits{S.}},
\bauthor{\bsnm{Massa}, \binits{F.}},
\bauthor{\bsnm{Lerer}, \binits{A.}},
\bauthor{\bsnm{Bradbury}, \binits{J.}},
\bauthor{\bsnm{Chanan}, \binits{G.}},
\bauthor{\bsnm{Killeen}, \binits{T.}},
\bauthor{\bsnm{Lin}, \binits{Z.}},
\bauthor{\bsnm{Gimelshein}, \binits{N.}},
\bauthor{\bsnm{Antiga}, \binits{L.}},
\bauthor{\bsnm{Desmaison}, \binits{A.}},
\bauthor{\bsnm{K\"{o}pf}, \binits{A.}},
\bauthor{\bsnm{Yang}, \binits{E.}},
\bauthor{\bsnm{DeVito}, \binits{Z.}},
\bauthor{\bsnm{Raison}, \binits{M.}},
\bauthor{\bsnm{Tejani}, \binits{A.}},
\bauthor{\bsnm{Chilamkurthy}, \binits{S.}},
\bauthor{\bsnm{Steiner}, \binits{B.}},
\bauthor{\bsnm{Fang}, \binits{L.}},
\bauthor{\bsnm{Bai}, \binits{J.}},
\bauthor{\bsnm{Chintala}, \binits{S.}}:
\bbtitle{PyTorch: an imperative style, high-performance deep learning library}.
\bpublisher{Curran Associates Inc.},
\blocation{Red Hook, NY, USA}
(\byear{2019})
\end{bbook}
\endbibitem

\bibitem{Jacobi+1846+51+94}
\begin{botherref}
\oauthor{\bsnm{Jacobi}, \binits{C.G.J.}}:
Über ein leichtes verfahren die in der theorie der säcularstörungen
  vorkommenden gleichungen numerisch aufzulösen*).:
\textbf{1846}(30),
51--94
(1846).
\doiurl{10.1515/crll.1846.30.51}
\end{botherref}
\endbibitem

\bibitem{Wen10}
\begin{botherref}
\oauthor{\bsnm{Wen}, \binits{Z.}},
\oauthor{\bsnm{Yin}, \binits{W.}}:
A feasible method for optimization with orthogonality constraints.
Mathematical Programming
\textbf{142}
(2010).
\doiurl{10.1007/s10107-012-0584-1}
\end{botherref}
\endbibitem

\bibitem{NoceWrig06}
\begin{bbook}
\bauthor{\bsnm{Nocedal}, \binits{J.}},
\bauthor{\bsnm{Wright}, \binits{S.J.}}:
\bbtitle{Numerical Optimization},
\bedition{2}nd edn.
\bpublisher{Springer},
\blocation{New York, NY, USA}
(\byear{2006})
\end{bbook}
\endbibitem

\bibitem{Tagare2011NotesOO}
\begin{bchapter}
\bauthor{\bsnm{Tagare}, \binits{H.}}:
\bctitle{Notes on optimization on stiefel manifolds}.
(\byear{2011})
\end{bchapter}
\endbibitem

\end{thebibliography}


\clearpage
\begin{appendices}

\section{Notation} \label{sec:Notation}

We define $[n] = \{1, 2, \dots, n\}$ as the set of the first $n$ natural numbers. We denote $\mI_n$ as the $n$ dimensional identity matrix. The group of $n$ dimensional orthogonal matrices is $\sS\sO(n)$. $\sA \cupdot \sB$ will denote the disjoint union of two sets $\sA$ and $\sB$, therefore $\sA_1 \cupdot \sA_2 \cupdot \dots \cupdot \sA_k = \sS$ is a partition of $\sS$. 

Given a matrix $\mA \in \mathbb{R}^{n \times n}$ and two sequences of indices $\vi = (i_1, \dots, i_k) \in [n]^k$ and $\vj = (j_1, \dots, j_k) \in [n]^k$ assuming that $i_1 < i_2 < \dots < i_k$ and $j_1 < j_2 < \dots < j_k$, $\mA_{\vi, \vj}$ will be the $k \times k$ matrix with entries $[\mA_{\vi, \vj}]_{x, y} = \mA_{i_x, j_y}$. Furthermore, $\mA_{i, :}$ and $\mA_{:, j}$ denote the $i$-th row and the $j$-th column of $\mA$, respectively. Given $\mA_1 \in \mathbb{R}^{n_1 \times m_1}$ and $\mA_2 \in \mathbb{R}^{n_2 \times m_2}$, $\mA_1 \oplus \mA_2$ is the $(n_1 + n_2) \times (m_1 + m_2)$ dimensional matrix with entries
$$[\mA_1 \oplus \mA_2]_{i, j} = 
\begin{cases} 
[\mA_1]_{i, j} & \text{if} \ \ i \leq n_1 \ \ \text{and} \ \ j \leq m_1 \\
[\mA_2]_{i - n_1, j - m_1} & \text{if} \ \ i > n_1 \ \ \text{and} \ \ j > m_1 \\
0 & \text{otherwise.}
\end{cases}$$
A matrix $\mA$ is said to be block diagonal if it is of the form
\begin{equation}
\mA = \mA_1 \oplus \mA_2 \oplus \dots \oplus \mA_p
\label{eq:block-matrix}
\end{equation}
for some sequence of smaller matrices $\mA_1, \dots, \mA_p$. For the generalized block diagonal matrix, we remove the restriction that each block in (\ref{eq:block-matrix}) must involve a contiguous set of indices, and introduce the notation
\begin{equation}
\mA = \oplus_{(i_1^1, \dots, i_{k_1}^1)} \mA_1 \oplus_{(i_1^2, \dots, i_{k_2}^2)} \mA_2 \dots \oplus_{(i_1^p, \dots, i_{k_p}^p)} \mA_p
\label{eq:block-matrix-index}
\end{equation}

\noindent in which

$$
\mA_{a, b} =
\begin{cases}
[\mA_u]_{q, r} & \text{if} \ \ i_q^u = a \ \ \text{and} \ \ i^u_r = b \ \ \text{for some} \ \ u, q, r, \\
0 & \text{otherwise.}
\end{cases}
$$

We will sometimes abbreviate expressions like (\ref{eq:block-matrix-index}) by dropping the first \(\oplus\) operator and its indices.

Here is an example illustrating the notation used in \ref{eq:block-matrix-index}. Consider the following matrices:
\[
\mA_1 = \begin{pmatrix}
1 & 2 \\
3 & 4
\end{pmatrix}, \quad
\mA_2 = \begin{pmatrix}
5 & 6 \\
7 & 8
\end{pmatrix}
\]

We construct a generalized block diagonal matrix $\mA$ using the indices:

\begin{itemize}
    \item For $\mA_1$: rows and columns $(1, 3)$
    \item For $\mA_2$: rows and columns $(2, 4)$
\end{itemize}

Using the notation from \ref{eq:block-matrix-index}:
\[
\mA = \oplus_{(1, 3)} \mA_1 \oplus_{(2, 4)} \mA_2
\]

The resulting $4 \times 4$ matrix $\mA$ is:
\[
\mA = \begin{pmatrix}
1 & 0 & 2 & 0 \\
0 & 5 & 0 & 6 \\
3 & 0 & 4 & 0 \\
0 & 7 & 0 & 8
\end{pmatrix}
\]

Here, $\mA_1$ is placed in the positions corresponding to rows and columns $(1, 3)$, and $\mA_2$ is placed in the positions corresponding to rows and columns $(2, 4)$, with all other entries being zero.

The \textbf{Kronecker tensor product} $\mA_1 \otimes \mA_2$ of two matrices $\mA_1 \in \mathbb{R}^{n_1 \times m_1}$ and $\mA_2 \in \mathbb{R}^{n_2 \times m_2}$ is an $n_1n_2 \times m_1m_2$ matrix constructed as follows:
\[
[\mA_1 \otimes \mA_2]_{(i_1 - 1)n_2 + i_2, (j_1 - 1)m_2 + j_2} = [\mA_1]_{i_1, j_1} \cdot [\mA_2]_{i_2, j_2}.
\]

This means that each element of $\mA_1$ is multiplied by the entire matrix $\mA_2$, and the resulting blocks are arranged in the same relative positions as the elements of $\mA_1$.

To generalize, for $p$ matrices $\mA_1, \mA_2, \ldots, \mA_p$, the Kronecker product is denoted as $\mA_1 \otimes \mA_2 \otimes \dots \otimes \mA_p$. When we take the Kronecker product of a single matrix $\mA$ with itself $p$ times, we write this as $\mA^{\otimes p} = \mA \otimes \mA \otimes \dots \otimes \mA$.

A matrix $\mA \in \mathbb{R}^{n \times n}$ is called \textbf{skew-symmetric} (or \textbf{anti-symmetric}) if it satisfies the condition $\mA^T = -\mA$. This means that the transpose of $\mA$ is equal to its negative, i.e., $\mA_{ij} = -\mA_{ji}$ for all $i, j$. Skew-symmetric matrices have zeros on their diagonal since $\mA_{ii} = -\mA_{ii}$ implies $\mA_{ii} = 0$.

The \textbf{Euclidean inner product} between two matrices $\mA \in \mathbb{R}^{m \times n}$ and $\mB \in \mathbb{R}^{m \times n}$ is defined as:
\[
\langle \mA, \mB \rangle = \sum_{j, k} \mA_{j, k} \mB_{j, k} = \text{trace}(\mA^T \mB).
\]
This inner product is a natural extension of the dot product for vectors, summing the products of corresponding elements of the matrices.

The \textbf{Frobenius norm} of a matrix $\mA$ is given by:
\[
\|\mA\|_F = \sqrt{\sum_{i, j} \mA_{i, j}^2}.
\]

This norm measures the ``size'' of a matrix by considering the square root of the sum of the squares of all its entries. It is analogous to the Euclidean norm for vectors, providing a single number that reflects the overall magnitude of the matrix's elements.

\section{Multiresolution Matrix Factorization} \label{sec:MMF}

\subsection{Background} \label{sec:Background}

Most commonly used matrix factorization algorithms, such as principal component analysis (PCA), singular value decomposition (SVD), or non-negative matrix factorization (NMF) are inherently single-level algorithms. Saying that a symmetric matrix $\mA \in \mathbb{R}^{n \times n}$ is of rank $r \ll n$ means that it can be expressed in terms of a dictionary of $r$ mutually orthogonal unit vectors $\{u_1, u_2, \dots, u_r\}$ in the form
$$\mA = \sum_{i = 1}^r \lambda_i u_i u_i^T,$$
where $u_1, \dots, u_r$ are the normalized eigenvectors of $A$ and $\lambda_1, \dots, \lambda_r$ are the corresponding eigenvalues. This is the decomposition that PCA finds, and it corresponds to factorizing $\mA$ in the form
\begin{equation}
\mA = \mU^T \mH \mU,
\label{eq:eigen}
\end{equation}
where $\mU$ is an orthogonal matrix and $\mH$ is a diagonal matrix with the eigenvalues of $\mA$ on its diagonal. The drawback of PCA is that eigenvectors are almost always dense, while matrices occuring in learning problems, especially those related to graphs, often have strong locality properties, in the sense that they are more closely couple certain clusters of nearby coordinates than those farther apart with respect to the underlying topology. In such cases, modeling $A$ in terms of a basis of global eigenfunctions is both computationally wasteful and conceptually unreasonable: a localized dictionary would be more appropriate. In contrast to PCA, \cite{pmlr-v32-kondor14} proposed \textit{Multiresolution Matrix Factorization}, or MMF for short, to construct a sparse hierarchical system of $L$-level dictionaries. The corresponding matrix factorization is of the form
$$\mA = \mU_1^T \mU_2^T \dots \mU_L^T \mH \mU_L \dots \mU_2 \mU_1,$$
where $\mH$ is close to diagonal and $\mU_1, \dots, \mU_L$ are sparse orthogonal matrices with the following constraints:
\begin{enumerate}
\item Each $\mU_\ell$ is $k$-point rotation for some small $k$, meaning that it only rotates $k$ coordinates at a time. Formally, Def.~\ref{def:rotation-matrix} defines and Fig.~\ref{fig:rotation-matrix} shows an example of the $k$-point rotation matrix. 
\item There is a nested sequence of sets $\sS_L \subseteq \cdots \subseteq \sS_1 \subseteq \sS_0 = [n]$ such that the coordinates rotated by $\mU_\ell$ are a subset of $\sS_\ell$.
\item $\mH$ is an $\sS_L$-core-diagonal matrix that is formally defined in Def.~\ref{def:core-diagonal}.
\end{enumerate}

\begin{definition} \label{def:rotation-matrix}
We say that $\mU \in \mathbb{R}^{n \times n}$ is an \textbf{elementary rotation of order $k$} (also called as a $k$-point rotation) if it is an orthogonal matrix of the form
$$\mU = \mI_{n - k} \oplus_{(i_1, \cdots, i_k)} \mO$$
for some $\sI = \{i_1, \cdots, i_k\} \subseteq [n]$ and $\mO \in \sS\sO(k)$. We denote the set of all such matrices as $\sS\sO_k(n)$.
\end{definition}

The simplest case are second order rotations, or called Givens rotations, which are of the form
\begin{equation}
\mU = \mI_{n - 2} \oplus_{(i, j)} \mO = 
\begin{pmatrix}
\cdot & & & & \\
& \cos(\theta) & & -\sin(\theta) & \\
& & \cdot & & \\
& \sin(\theta) & & \cos(\theta) & \\
& & & & \cdot \\
\end{pmatrix},
\label{eq:givens}
\end{equation}
where the dots denote the identity that apart from rows/columns $i$ and $j$, and $\mO \in \sS\sO(2)$ is the rotation matrix of some angle $\theta \in [0, 2\pi)$. Indeed, Jacobi's algorithm for diagonalizing symmetric matrices \cite{Jacobi+1846+51+94} is a special case of MMF factorization over Givens rotations.

\begin{definition} \label{def:core-diagonal}
Given a set $\sS \subseteq [n]$, we say that a matrix $\mH \in \mathbb{R}^{n \times n}$ is $\sS$-core-diagonal if $\mH_{i, j} = 0$ unless $i, j \in \sS$ or $i = j$. Equivalently, $\mH$ is $\sS$-core-diagonal if it can be written in the form $\mH = \mD \oplus_{\sS} \overline{\mH}$, for some $\overline{H} \in \mathbb{R}^{\lvert \sS \rvert \times \lvert \sS \rvert}$ and $\mD$ is diagonal. We denote the set of all $\sS$-core-diagonal symmetric matrices of dimension $n$ as $\sH^{\sS}_n$.
\end{definition}

Here is an example of a \(\sS\)-core-diagonal matrix. Consider $n = 5$ and $\sS = \{2, 4\}$. A matrix $\mH \in \mathbb{R}^{5 \times 5}$ is $\sS$-core-diagonal if:

\[
\mH = \begin{pmatrix}
1 & 0 & 0 & 0 & 0 \\
0 & 2 & 0 & 3 & 0 \\
0 & 0 & 4 & 0 & 0 \\
0 & 3 & 0 & 5 & 0 \\
0 & 0 & 0 & 0 & 6
\end{pmatrix}
\]

This matrix can be decomposed as $\mH = \mD \oplus_{\sS} \overline{\mH}$, where:

\[
\mD = \begin{pmatrix}
1 & 0 & 0 \\
0 & 4 & 0 \\
0 & 0 & 6 \\
\end{pmatrix}, \quad
\overline{\mH} = \begin{pmatrix}
2 & 3 \\
3 & 5
\end{pmatrix}
\]


\subsection{Multiresolution analysis} \label{sec:mra}

\begin{definition} \label{def:mmf}
Given an appropriate subset $\sO$ of the group $\sS\sO(n)$ of $n$-dimensional rotation matrices, a depth parameter $L \in \mathbb{N}$, and a sequence of integers $n = d_0 \ge d_1 \ge d_2 \ge \dots \ge d_L \ge 1$, a \textbf{Multiresolution Matrix Factorization (MMF)} of a symmetric matrix $\mA \in \mathbb{R}^{n \times n}$ over $\sO$ is a factorization of the form
\begin{equation} \label{eq:mmf}
\mA = \mU_1^T \mU_2^T \dots \mU_L^T \mH \mU_L \dots \mU_2 \mU_1,
\end{equation}
where each $\mU_\ell \in \sO$ satisfies $[\mU_\ell]_{[n] \setminus \sS_{\ell - 1}, [n] \setminus \sS_{\ell - 1}} = \mI_{n - d_\ell}$ for some nested sequence of sets $\sS_L \subseteq \cdots \subseteq \sS_1 \subseteq \sS_0 = [n]$ with $\lvert \sS_\ell \rvert = d_\ell$, and $\mH \in \sH^{\sS_L}_n$ is an $\sS_L$-core-diagonal matrix.
\end{definition}

\begin{definition} \label{def:factorizable}
We say that a symmetric matrix $\mA \in \mathbb{R}^{n \times n}$ is \textbf{fully multiresolution factorizable} over $\sO \subset \sS\sO(n)$ with $(d_1, \dots, d_L)$ if it has a decomposition of the form described in Def.~\ref{def:mmf}.
\end{definition}

We formally define MMF in Defs.~\ref{def:mmf} and \ref{def:factorizable}. Furthermore, \cite{pmlr-v32-kondor14} has shown that MMF mirrors the classical theory of multiresolution analysis (MRA) on the real line \cite{192463} to discrete spaces. The functional analytic view of wavelets is provided by MRA, which, similarly to Fourier analysis, is a way of filtering some function space into a sequence of subspaces
\begin{equation}
\dots \subset \sV_{-1} \subset \sV_0 \subset \sV_1 \subset \sV_2 \subset \dots
\label{eq:subspace-sequence}
\end{equation}
However, it is best to conceptualize (\ref{eq:subspace-sequence}) as an iterative process of splitting each $\sV_\ell$ into the orthogonal sum $\sV_\ell = \sV_{\ell + 1} \oplus \sW_{\ell + 1}$ of a smoother part $\sV_{\ell + 1}$, called the \textit{approximation space}; and a rougher part $\sW_{\ell + 1}$, called the \textit{detail space} (see Fig.~\ref{fig:subspaces}). Each $\sV_\ell$ has an orthonormal basis $\Phi_\ell \triangleq \{\phi_m^\ell\}_m$ in which each $\phi$ is called a \textit{father} wavelet. Each complementary space $\sW_\ell$ is also spanned by an orthonormal basis $\Psi_\ell \triangleq \{\psi_m^\ell\}_m$ in which each $\psi$ is called a \textit{mother} wavelet. In MMF, each individual rotation $\mU_\ell: \sV_{\ell - 1} \rightarrow \sV_\ell \oplus \sW_\ell$ is a sparse basis transform that expresses $\Phi_\ell \cup \Psi_\ell$ in the previous basis $\Phi_{\ell - 1}$ such that:
$$\phi_m^\ell = \sum_{i = 1}^{\text{dim}(\sV_{\ell - 1})} [\mU_\ell]_{m, i} \phi_i^{\ell - 1},$$
$$\psi_m^\ell = \sum_{i = 1}^{\text{dim}(\sV_{\ell - 1})} [\mU_\ell]_{m + \text{dim}(\sV_{\ell - 1}), i} \phi_i^{\ell - 1},$$
in which $\Phi_0$ is the standard basis, i.e. $\phi_m^0 = e_m$; and $\text{dim}(\sV_\ell) = d_\ell = \lvert \sS_\ell \rvert$. In the $\Phi_1 \cup \Psi_1$ basis, $\mA$ compresses into $\mA_1 = \mU_1\mA\mU_1^T$. In the $\Phi_2 \cup \Psi_2 \cup \Psi_1$ basis, it becomes $\mA_2 = \mU_2\mU_1\mA\mU_1^T\mU_2^T$, and so on. Finally, in the $\Phi_L \cup \Psi_L \cup \dots \cup \Psi_1$ basis, it takes on the form $\mA_L = \mH = \mU_L \dots \mU_2\mU_1 \mA \mU_1^T\mU_2^T \dots \mU_L^T$ that consists of four distinct blocks (supposingly that we permute the rows/columns accordingly):
$$\mH = \begin{pmatrix} \mH_{\Phi, \Phi} & \mH_{\Phi, \Psi} \\ \mH_{\Psi, \Phi} & \mH_{\Psi, \Psi} \end{pmatrix},$$
where $\mH_{\Phi, \Phi} \in \mathbb{R}^{\text{dim}(\sV_L) \times \text{dim}(\sV_L)}$ is effectively $\mA$ compressed to $\sV_L$, $\mH_{\Phi, \Psi} = \mH_{\Psi, \Phi}^T = 0$ and $\mH_{\Psi, \Psi}$ is diagonal. MMF approximates $\mA$ in the form
$$\mA \approx \sum_{i, j = 1}^{d_L} h_{i, j} \phi_i^L {\phi_j^L}^T + \sum_{\ell = 1}^L \sum_{m = 1}^{d_\ell} c_m^\ell \psi_m^\ell {\psi_m^\ell}^T,$$
where $h_{i, j}$ coefficients are the entries of the $\mH_{\Phi, \Phi}$ block, and $c_m^\ell = \langle \psi_m^\ell, \mA \psi_m^\ell \rangle$ wavelet frequencies are the diagonal elements of the $\mH_{\Psi, \Psi}$ block.

In particular, the dictionary vectors corresponding to certain rows of $\mU_1$ are interpreted as level one wavelets, the dictionary vectors corresponding to certain rows of $\mU_2\mU_1$ are interpreted as level two wavelets, and so on (see Section \ref{sec:mra}). One thing that is immediately clear is that whereas Eq.~(\ref{eq:eigen}) diagonalizes $\mA$ in a single step, multiresolution analysis will involve a sequence of basis transforms $\mU_1, \mU_2, \dots, \mU_L$, transforming $\mA$ step by step as
\begin{equation}
\mA \rightarrow \mU_1\mA\mU_1^T \rightarrow \mU_2\mU_1\mA\mU_1^T\mU_2^T \rightarrow \dots \rightarrow \mU_L \dots \mU_2\mU_1\mA\mU_1^T\mU_2^T \dots \mU_L^T,
\label{eq:mmf-transform}
\end{equation}
so the corresponding matrix factorization must be a multilevel factorization
\begin{equation}
\mA \approx \mU_1^T \mU_2^T \dots \mU_\ell^T \mH \mU_\ell \dots \mU_2 \mU_1.
\label{eq:mmf-factorization}
\end{equation}
Fig.~\ref{fig:mmf-transform} depicts the multiresolution transform of MMF as in Eq.~(\ref{eq:mmf-transform}). Fig.~\ref{fig:mmf-factorization} illustrates the corresponding factorization as in Eq.~(\ref{eq:mmf-factorization}).

\begin{figure}[t]
$$
\xymatrix{
L_2(\sX) \ar[r] & \cdots \ar[r] & \sV_0 \ar[r] \ar[dr] & \sV_1 \ar[r] \ar[dr] & \sV_2 \ar[r] \ar[dr] & \cdots \\
& & & \sW_1 & \sW_2 & \sW_3
}
$$
\caption{\label{fig:subspaces}
Multiresolution analysis splits each function space $\sV_0, \sV_1, \dots$ into the direct sum of a smoother part $\sV_{\ell + 1}$ and a rougher part $\sW_{\ell + 1}$.
}
\end{figure}
\newcommand{\tikmxA}[1]{
\bigg(\,\begin{tikzpicture}[baseline=-17, scale=0.06]
\filldraw[gray] (0,0) rectangle +(#1,-#1); \end{tikzpicture}\,\bigg)}

\newcommand{\tikmxB}[2]{
\bigg(\,\begin{tikzpicture}[baseline=-17, scale=0.06]
\filldraw[gray] (0,0) rectangle +(#1,-#1);
\foreach \i in {#1,...,#2}{
\filldraw[gray] (\i,-\i) rectangle +(1,-1);}
\end{tikzpicture}\,\bigg)}

\newcommand{\tikmxC}[2]{
\bigg(\,\begin{tikzpicture}[baseline=-17, scale=0.06]
\draw (0,0) rectangle +(17,-17);\draw (10,-9) node {#1}; \end{tikzpicture} \,\bigg)}

\newcommand{\tikmxD}[3]{
\bigg(\,\begin{tikzpicture}[baseline=-17, scale=0.06]
\filldraw[gray] (#1,-#1) rectangle +(#2,-#2);
\foreach \i in {0,...,#3}{
\filldraw[gray] (\i,-\i) rectangle +(1,-1);}
\end{tikzpicture}\,\bigg)}

\begin{figure}[t]
\[
\mI_{n - k} \oplus_{(i_1, .., i_k)} \mO = 
\,\Pi\, \underset{\displaystyle \mU}{\tikmxD{7}{4}{17}} \,\Pi^\top
\]\\ \vspace{-10pt}\mbox{}
\caption{\label{fig:rotation-matrix}
A rotation matrix of order $k$. The purpose of permutation matrix $\Pi$ is solely to ensure that the blocks of the matrices appear contiguous in the figure. In this case, $n = 17$ and $k = 4$.
}
\end{figure}

\begin{figure}[t]
\[
\,\Pi\, \underset{\displaystyle \mA}{\tikmxA{17}} \,\Pi^\top \xrightarrow{~U_1~} 
\underset{\displaystyle \mA_1 = \mU_1 \mA \mU_1^T}{\tikmxB{14}{17}}\xrightarrow{~U_2~}
\underset{\displaystyle \mA_2 = \mU_2 \mA_1 \mU_2^T}{\tikmxB{10}{17}}\xrightarrow{~~}
\ldots\xrightarrow{~~}
\underset{\displaystyle \mA_L = \mH}{\tikmxB{7}{17}}
\]\\ \vspace{-10pt}\mbox{}
\caption{\label{fig:mmf-transform}
MMF can be thought of as a process of successively compressing $\mA$ to size $d_1 \times d_1$, $d_2 \times d_2$, etc. (plus the diagonal entries) down to the final $d_L \times d_L$ core-diagonal matrix $\mH$ (see Def.~\ref{def:mmf}). The role of permutation matrix $\Pi$ is purely for the ease of visualization (as in Fig.~\ref{fig:rotation-matrix}).
}
\end{figure}

\begin{figure}[t]
\[
\,\Pi\, \underset{\displaystyle A}{\tikmxA{17}} \,\Pi^\top 
\approx
\underset{\displaystyle U_1^T}{\tikmxD{7}{4}{17}}
\ldots
\underset{\displaystyle U_L^T}{\tikmxD{14}{4}{17}}
\underset{\displaystyle H}{\tikmxD{0}{7}{17}}
\underset{\displaystyle U_L}{\tikmxD{14}{4}{17}}
\ldots
\underset{\displaystyle U_1}{\tikmxD{7}{4}{17}}
\]\\ \vspace{-20pt}\mbox{}
\caption{\label{fig:mmf-factorization}
Matrix approximation as in Eq.~\ref{eq:mmf}. In this figure, the core block size of each rotation matrix $\mU_\ell$ and $\mH$ are $k \times k = 4 \times 4$ and $d_L \times d_L = 8 \times 8$, respectively. Permutation matrix $\Pi$ is only for visualization (as in Figs.~\ref{fig:rotation-matrix}~\ref{fig:mmf-transform}).
}
\end{figure}

\subsection{Optimization by heuristics} \label{sec:Optimization}

Heuristically, factorizing $\mA$ can be approximated by an iterative process that starts by setting $\mA_0 = \mA$ and $\sS_1 = [n]$, and then executes the following steps for each resolution level $\ell \in \{1, \dots, L\}$:
\begin{enumerate}
\item Given $\mA_{\ell - 1}$, select $k$ indices $\sI_\ell = \{i_1, \dots, i_k\} \subset \sS_{\ell - 1}$ of rows/columns of the active submatrix $[\mA_{\ell - 1}]_{\sS_{\ell - 1}, \sS_{\ell - 1}}$ that are highly correlated with each other.
\item Find the corresponding $k$-point rotation $\mU_\ell$ to $\sI_\ell$, and compute $\mA_\ell = \mU_\ell \mA_{\ell - 1} \mU_\ell^T$ that brings the submatrix $[\mA_{\ell - 1}]_{\sI_\ell, \sI_\ell}$ close to diagonal. In the last level, we set $\mH = \mA_L$ (see Fig.~\ref{fig:mmf-transform}).
\item Determine the set of coordinates $\sT_\ell \subseteq \sS_{\ell - 1}$ that are to be designated wavelets at this level, and eliminate them from the active set by setting $\sS_\ell = \sS_{\ell - 1} \setminus \sT_\ell$.
\end{enumerate}

\section{Stiefel Manifold Optimization} \label{sec:proof}

In order to solve the MMF optimization problem, we consider the following generic optimization with orthogonality constraints:
\begin{equation}
\min_{\mX \in \mathbb{R}^{n \times p}} \mathcal{F}(\mX), \ \ \text{s.t.} \ \ \mX^T \mX = \mI_p,
\end{equation}

We identify tangent vectors to the manifold with $n \times p$ matrices. We denote the tangent space at $\mX$ as $\mathcal{T}_{\mX} \mathcal{V}_p(\mathbb{R}^n)$. Lemma \ref{lemma:tangent} characterizes vectors in the tangent space.

\begin{lemma}
Any $\mZ \in \mathcal{T}_{\mX} \mathcal{V}_p(\mathbb{R}^n)$, then $\mZ$ (as an element of $\mathbb{R}^{n \times p}$) satisfies
$$\mZ^T \mX + \mX^T \mZ = 0,$$
where $\mZ^T \mX$ is a skew-symmetric $p \times p$ matrix.
\label{lemma:tangent}
\end{lemma}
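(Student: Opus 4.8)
The plan is to exploit the fact that the Stiefel manifold $\mathcal{V}_p(\mathbb{R}^n)$ is defined as the zero set of the constraint map $\mathcal{C}(\mX) = \mX^T \mX - \mI_p$, so that tangent vectors are precisely the velocities of smooth curves that stay on the manifold. First I would take an arbitrary $\mZ \in \mathcal{T}_{\mX} \mathcal{V}_p(\mathbb{R}^n)$ and invoke the definition of the tangent space of an embedded submanifold: there exists a smooth curve $\gamma : (-\epsilon, \epsilon) \to \mathcal{V}_p(\mathbb{R}^n)$ with $\gamma(0) = \mX$ and $\dot{\gamma}(0) = \mZ$, where each $\gamma(t)$ is an $n \times p$ matrix satisfying the orthonormality constraint.

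Next I would differentiate the identity $\gamma(t)^T \gamma(t) = \mI_p$, which holds for all $t$ because the curve lies in the manifold. Applying the product rule and evaluating at $t = 0$ gives
$$\dot{\gamma}(0)^T \gamma(0) + \gamma(0)^T \dot{\gamma}(0) = 0,$$
since the right-hand side $\mI_p$ is constant in $t$. Substituting $\gamma(0) = \mX$ and $\dot{\gamma}(0) = \mZ$ yields the asserted relation $\mZ^T \mX + \mX^T \mZ = 0$. Equivalently, one may phrase this as noting that $\mZ$ lies in the kernel of the differential $D\mathcal{C}(\mX)$, whose action on a direction $\mZ$ is exactly $\mZ^T \mX + \mX^T \mZ$; both viewpoints give the same computation.

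Finally, to establish the skew-symmetry claim, I would set $\mS = \mZ^T \mX$ and observe that $\mX^T \mZ = (\mZ^T \mX)^T = \mS^T$. The relation just derived then reads $\mS + \mS^T = 0$, i.e.\ $\mS^T = -\mS$, which is precisely the definition of a skew-symmetric $p \times p$ matrix (as recalled in the notation appendix). This completes the argument.

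I do not expect any genuine obstacle here, since the result is a direct differentiation of the defining constraint. The only point deserving care is the justification that every tangent vector is realized as the velocity of an admissible curve; this is immediate for an embedded submanifold of $\mathbb{R}^{n \times p}$, which is the viewpoint the excerpt adopts when it says we view $\mathcal{V}_p(\mathbb{R}^n)$ as an embedded submanifold, so no additional regularity argument (e.g.\ verifying that $\mathcal{C}$ is a submersion) is strictly needed for the stated direction, though it could be included for completeness.
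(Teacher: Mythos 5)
Your proof is correct and follows essentially the same route as the paper: realize $\mZ$ as the velocity of a curve in $\mathcal{V}_p(\mathbb{R}^n)$ through $\mX$, differentiate the constraint $\gamma(t)^T\gamma(t)=\mI_p$ at $t=0$, and read off $\mZ^T\mX+\mX^T\mZ=0$. Your explicit verification of the skew-symmetry of $\mZ^T\mX$ is a small addition the paper leaves implicit, but the argument is the same.
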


\begin{proof}
Let $\mY(t)$ be a curve in $\mathcal{V}_p(\mathbb{R}^n)$ that starts from $\mX$. We have:
\begin{equation}
\mY^T(t)\mY(t) = \mI_p.
\label{eq:identity}
\end{equation}
We differentiate two sides of Eq.~(\ref{eq:identity}) with respect to $t$:
$$\frac{d}{dt}(\mY^T(t)\mY(t)) = 0$$
that leads to:
$$\bigg(\frac{d\mY}{dt}(0)\bigg)^T \mY(0) + \mY(0)^T \frac{d\mY}{dt}(0) = 0$$
at $t = 0$. Recall that by definition, $\mY(0) = \mX$ and $\frac{d\mY}{dt}(0)$ is any element of the tangent space at $\mX$. Therefore, we arrive at $\mZ^T\mX + \mX^T\mZ = 0$.
\end{proof}

Suppose that $\mathcal{F}$ is a differentiable function. The gradient of $\mathcal{F}$ with respect to $\mX$ is denoted by $\mG \triangleq \mathcal{D}\mathcal{F}_{\mX} \triangleq \big(\frac{\partial \mathcal{F}(\mX)}{\partial \mX_{i, j}}\big)$. The derivative of $\mathcal{F}$ at $\mX$ in a direction $\mZ$ is
$$\mathcal{D}\mathcal{F}_{\mX}(\mZ) \triangleq \lim_{t \rightarrow 0} \frac{\mathcal{F}(\mX + t\mZ) - \mathcal{F}(\mX)}{t} = \langle \mG, \mZ \rangle$$

Since the matrix $\mX^T \mX$ is symmetric, the Lagrangian multiplier $\Lambda$ corresponding to $\mX^T\mX = \mI_p$ is a symmetric matrix. The Lagrangian function of problem (\ref{eq:opt-prob}) is
\begin{equation}
\mathcal{L}(\mX, \mLambda) = \mathcal{F}(\mX) - \frac{1}{2} \text{trace}(\mLambda (\mX^T \mX - \mI_p))
\label{eq:lagrangian}
\end{equation}

\begin{lemma}
Suppose that $\mX$ is a local minimizer of problem (\ref{eq:opt-prob}). Then $\mX$ satisfies the first-order optimality conditions $\mathcal{D}_{\mX}\mathcal{L}(\mX, \mLambda) = \mG - \mX\mG^T\mX = 0$ and $\mX^T\mX = \mI_p$ with the associated Lagrangian multiplier $\mLambda = \mG^T\mX$. Define $\displaystyle \nabla \mathcal{F}(\mX) \triangleq \mG - \mX \mG^T \mX$ and $\mA \triangleq \mG \mX^T - \mX \mG^T$. Then $\displaystyle \nabla \mathcal{F} = \mA \mX$. Moreover, $\displaystyle \nabla \mathcal{F} = 0$ if and only if $\mA = 0$.
\label{lemma:first-order-condition}
\end{lemma}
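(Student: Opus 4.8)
\section*{Proof proposal}

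The plan is to extract the stated conditions by differentiating the Lagrangian $\mathcal{L}(\mX,\mLambda)$ in (\ref{eq:lagrangian}), and then to verify the two algebraic identities $\nabla\mathcal{F}=\mA\mX$ and the equivalence $\nabla\mathcal{F}=0 \Leftrightarrow \mA=0$ by direct manipulation that leans on the constraint $\mX^T\mX=\mI_p$ and the skew-symmetry of $\mA$.

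First I would compute $\mathcal{D}_{\mX}\mathcal{L}$. The $\mathcal{F}$-term contributes $\mG$ by the definition of the gradient given just above. For the penalty term, I would write out the differential of $\frac{1}{2}\,\text{trace}(\mLambda\mX^T\mX)$; using that $\mLambda$ is symmetric (as noted before the statement, since $\mX^T\mX$ is symmetric) merges the two transposed contributions into $\langle \mX\mLambda,\, d\mX\rangle$, so the penalty gradient is $\mX\mLambda$ and
\[
\mathcal{D}_{\mX}\mathcal{L}(\mX,\mLambda) = \mG - \mX\mLambda.
\]
Setting this to zero at the local minimizer gives $\mG=\mX\mLambda$; left-multiplying by $\mX^T$ and invoking $\mX^T\mX=\mI_p$ yields $\mLambda=\mX^T\mG$. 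Since $\mLambda$ is symmetric, this forces $\mX^T\mG=\mG^T\mX$, so I may equivalently write $\mLambda=\mG^T\mX$, and substituting back recovers the first-order condition $\mG-\mX\mG^T\mX=0$ together with $\mX^T\mX=\mI_p$.

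For the identity $\nabla\mathcal{F}=\mA\mX$ I would simply expand, using the constraint once:
\[
\mA\mX = (\mG\mX^T - \mX\mG^T)\mX = \mG(\mX^T\mX) - \mX\mG^T\mX = \mG - \mX\mG^T\mX = \nabla\mathcal{F}.
\]
The reverse implication $\mA=0 \Rightarrow \nabla\mathcal{F}=0$ is then immediate from this. The real content is the forward direction: from $\nabla\mathcal{F}=\mG-\mX\mG^T\mX=0$ one gets $\mG=\mX\mG^T\mX$; left-multiplying by $\mX^T$ shows $\mX^T\mG=\mG^T\mX$, so $\mS \triangleq \mX^T\mG$ is symmetric and $\mG=\mX\mS$. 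Feeding this into $\mA$ collapses it, since $\mA = \mX\mS\mX^T - \mX\mS^T\mX^T = \mX(\mS-\mS^T)\mX^T = 0$.

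I expect the forward direction of the equivalence to be the only subtle point. Because $\mX$ is tall ($p\le n$), $\mX\mX^T$ is merely a projector rather than the identity, so one cannot cancel $\mX^T$ from $\mA\mX=0$ directly and conclude $\mA=0$. The resolution is to notice that the vanishing of $\nabla\mathcal{F}$ secretly encodes the symmetry of $\mX^T\mG$, which permits the factorization $\mG=\mX\mS$ with $\mS$ symmetric and makes $\mA$ collapse to zero. A secondary place to be careful is the transpose bookkeeping when differentiating the trace penalty, where the appeal to the symmetry of $\mLambda$ is what merges the two terms into $\mX\mLambda$.
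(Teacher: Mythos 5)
Your proposal is correct and follows essentially the same route as the paper: differentiate the Lagrangian to get $\mG - \mX\mLambda = 0$, left-multiply by $\mX^T$ and use symmetry of $\mLambda$ to obtain $\mLambda = \mG^T\mX$, then expand $\mA\mX$ using $\mX^T\mX = \mI_p$. The only difference is that the paper dismisses the equivalence $\nabla\mathcal{F}=0 \Leftrightarrow \mA=0$ as trivial, whereas you correctly identify that the forward direction requires an argument (since $\mX\mX^T$ is only a projector) and supply a clean one via the symmetric factorization $\mG=\mX\mS$.
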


\begin{proof}
Since $\mX \in \mathcal{V}_p(\mathbb{R}^n)$, we have $\mX^T\mX = \mI_p$. We differentiate both sides of the Lagrangian function:
$$\mathcal{D}_\mX \mathcal{L}(\mX, \mLambda) = \mathcal{D}\mathcal{F}(\mX) - \mX \mLambda = 0.$$
Recall that by definition, $\mG \triangleq \mathcal{D}\mathcal{F}(\mX)$, we have
\begin{equation}
\mathcal{D}_\mX \mathcal{L}(\mX, \mLambda) = \mG - \mX \mLambda = 0.
\end{equation}
Multiplying both sides by $\mX^T$, we get $\mX^T\mG - \mX^T\mX \mLambda = 0$ that leads to $\mX^T\mG - \mLambda = 0$ or $\mLambda = \mX^T\mG$. Since the matrix $\mX^T\mX$ is symmetric, the Lagrangian multiplier $\mLambda$ correspoding to $\mX^T\mX = \mI_p$ is a symmetric matrix. Therefore, we obtain $\mLambda = \mLambda^T = \mG^T\mX$ and $\mathcal{D}_\mX \mathcal{L}(\mX, \mLambda) = \mG - \mX\mG^T\mX = 0$. By definition, $\mA \triangleq \mG\mX^T - \mX\mG^T$. We have $\mA\mX = \mG - \mX\mG^T\mX = \nabla \mathcal{F}$. The last statement is trivial.
\end{proof}

Let $\mX \in \mathcal{V}_p(\mathbb{R}^n)$, and $\mW$ be any $n \times n$ skew-symmetric matrix. We consider the following curve that transforms $\mX$ by $\big( \mI + \frac{\tau}{2} \mW \big)^{-1} \big(\mI - \frac{\tau}{2} \mW\big)$:
\begin{equation}
\mY(\tau) = \big( \mI + \frac{\tau}{2} \mW \big)^{-1} \big(\mI - \frac{\tau}{2} \mW\big) \mX.
\label{eq:cayley}
\end{equation}
This is called as the \textit{Cayley transformation}. Its derivative with respect to $\tau$ is
\begin{equation}
\mY'(\tau) = -\bigg(\mI + \frac{\tau}{2} \mW\bigg)^{-1} \mW \bigg( \frac{\mX + \mY(\tau)}{2} \bigg).
\label{eq:y-derivative}
\end{equation}
The curve has the following properties:
\begin{enumerate}
\item It stays in the Stiefel manifold, i.e. $\mY(\tau)^T \mY(\tau) = \mI$.
\item Its tangent vector at $\tau = 0$ is $\mY'(0) = -\mW \mX$. It can be easily derived from Lemma \ref{lemma:tangent} that $\mY'(0)$ is in the tangent space $\mathcal{T}_{\mY(0)} \mathcal{V}_p(\mathbb{R}^n)$. Since $\mY(0) = X$ and $\mW$ is a skew-symmetric matrix, by letting $\mZ = -\mW\mX$, it is trivial that $\mZ^T\mX + \mX^T\mZ = 0$.
\end{enumerate}

\begin{lemma}
If we set $\mW \triangleq \mA \triangleq \mG\mX^T - \mX\mG^T$ (see Lemma ~\ref{lemma:first-order-condition}), then the curve $\mY(\tau)$ (defined in Eq.~(\ref{eq:cayley})) is a decent curve for $\mathcal{F}$ at $\tau = 0$, that is
$$\mathcal{F}'_\tau(\mY(0)) \triangleq \frac{\partial \mathcal{F}(\mY(\tau))}{\partial \tau}\bigg\vert_{\tau = 0} = -\frac{1}{2} \|\mA\|_F^2.$$
\label{lemma:descent-curve}
\end{lemma}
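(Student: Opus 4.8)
The plan is to reduce the claim to a single trace identity by combining the chain rule with the directional-derivative formula $\mathcal{D}\mathcal{F}_{\mX}(\mZ) = \langle \mG, \mZ \rangle$ recorded above. First I would observe that $\mY(0) = \mX$ lies on the Stiefel manifold and $\mathcal{F}$ is differentiable, so $\tau \mapsto \mathcal{F}(\mY(\tau))$ is differentiable at $0$ and the chain rule gives
$$\mathcal{F}'_\tau(\mY(0)) = \mathcal{D}\mathcal{F}_{\mX}\big(\mY'(0)\big) = \langle \mG, \mY'(0) \rangle.$$
I would then substitute the tangent-vector property $\mY'(0) = -\mW\mX$ listed for the Cayley curve (read off from Eq.~(\ref{eq:y-derivative}) at $\tau = 0$, where the bracketed factor reduces to $\mX$ since $\mY(0) = \mX$). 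The entire problem thus becomes evaluating $\langle \mG, -\mW\mX \rangle = -\langle \mG, \mW\mX \rangle$ and showing it equals $-\frac{1}{2}\|\mA\|_F^2$.

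The core computation is the identity $\langle \mG, \mW\mX \rangle = \frac{1}{2}\|\mW\|_F^2$, where $\mW = \mA = \mG\mX^T - \mX\mG^T$. I would move $\mX$ across using the cyclic property of the trace: since
$$\langle \mG, \mW\mX \rangle = \text{trace}(\mG^T \mW \mX) = \text{trace}(\mX \mG^T \mW) = \langle \mG\mX^T, \mW \rangle,$$
setting $\mP \triangleq \mG\mX^T$ gives $\mW = \mP - \mP^T$ and reduces the goal to $\langle \mP, \mP - \mP^T \rangle = \frac{1}{2}\|\mP - \mP^T\|_F^2$. This last equality is the general fact that the Euclidean inner product of a matrix with a skew-symmetric matrix only sees its skew part: decomposing $\mP = \frac{1}{2}(\mP + \mP^T) + \frac{1}{2}(\mP - \mP^T)$ and using that symmetric and skew-symmetric matrices are orthogonal under $\langle \cdot, \cdot \rangle$, the symmetric summand drops out, leaving $\langle \mP, \mP - \mP^T \rangle = \langle \frac{1}{2}(\mP - \mP^T), \mP - \mP^T \rangle = \frac{1}{2}\|\mP - \mP^T\|_F^2$. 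Assembling the pieces yields $\langle \mG, \mW\mX \rangle = \frac{1}{2}\|\mW\|_F^2 = \frac{1}{2}\|\mA\|_F^2$, hence $\mathcal{F}'_\tau(\mY(0)) = -\frac{1}{2}\|\mA\|_F^2$.

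There is no deep obstacle here; the only step demanding care is the trace bookkeeping, and I would double-check exactly where, if at all, the constraint $\mX^T\mX = \mI_p$ enters. In the streamlined route above it is not needed, since the skew/symmetric orthogonality argument closes the computation directly. As a cross-check I would also run the alternative expansion $\mW\mX = (\mG\mX^T - \mX\mG^T)\mX = \mG - \mX\mG^T\mX = \nabla\mathcal{F}$, which does invoke $\mX^T\mX = \mI_p$ and ties the descent direction back to Lemma~\ref{lemma:first-order-condition}; both routes must produce the same value $\frac{1}{2}\|\mA\|_F^2$, giving confidence in the sign and the factor of one half.
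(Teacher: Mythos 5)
Your proposal is correct and follows essentially the same route as the paper's proof: chain rule, substitution of $\mY'(0) = -\mA\mX$, and the trace identity $\langle \mG, \mA\mX \rangle = \frac{1}{2}\|\mA\|_F^2$. The only difference is that you justify that identity explicitly via the orthogonal decomposition of $\mG\mX^T$ into symmetric and skew-symmetric parts (and correctly note that $\mX^T\mX = \mI_p$ is not needed there), whereas the paper states the trace computation without detail.
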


\begin{proof}
By the chain rule, we get 
$$\mathcal{F}'_\tau(\mY(\tau)) = \text{trace}(\mathcal{D}\mathcal{F}(\mY(\tau))^T \mY'(\tau)).$$
At $\tau = 0$, $\mathcal{D}\mathcal{F}(\mY(0)) = \mG$ and $\mY'(0) = -\mA\mX$. Therefore, 
$$\mathcal{F}'_\tau(\mY(0)) = -\text{trace}(\mG^T(\mG\mX^T - \mX\mG^T)\mX) = -\frac{1}{2}\text{trace}(\mA\mA^T) = -\frac{1}{2}\|\mA\|_F^2.$$
\end{proof}

It is well known that the steepest descent method with a fixed step size may not converge, but the convergence can be guaranteed by choosing the step size wisely: one can choose a step size by minimizing $\mathcal{F}(\mY(\tau))$ along the curve $\mY(\tau)$ with respect to $\tau$ \cite{Wen10}. With the choice of $\mW$ given by Lemma \ref{lemma:descent-curve}, the minimization algorithm using $\mY(\tau)$ is roughly sketched as follows: Start with some initial $\mX^{(0)}$. For $t > 0$, we generate $\mX^{(t + 1)}$ from $\mX^{(t)}$ by a curvilinear search along the curve $\mY(\tau) = \big( \mI + \frac{\tau}{2} \mW \big)^{-1} \big(\mI - \frac{\tau}{2} \mW\big) \mX^{(t)}$ by changing $\tau$. Because finding the global minimizer is computationally infeasible, the search terminates when then Armijo-Wolfe conditions that indicate an approximate minimizer are satisfied. The Armijo-Wolfe conditions require two parameters $0 < \rho_1 < \rho_2 < 1$ \cite{NoceWrig06} \cite{Wen10} \cite{Tagare2011NotesOO}:
\begin{equation}
\mathcal{F}(\mY(\tau)) \leq \mathcal{F}(\mY(0)) + \rho_1\tau\mathcal{F}'_\tau(\mY(0))
\label{eq:condition-1}
\end{equation}
\begin{equation}
\mathcal{F}'_\tau(\mY(\tau)) \ge \rho_2\mathcal{F}'_\tau(\mY(0))
\label{eq:condition-2}
\end{equation}
where $\mathcal{F}'_\tau(\mY(\tau)) = \text{trace}(\mG^T\mY'(\tau))$ while $\mY'(\tau)$ is computed as Eq.~(\ref{eq:y-derivative}) and $\mY'(0) = -\mA\mX$. The gradient descent algorithm on Stiefel manifold to optimize the generic orthogonal-constraint problem (\ref{eq:opt-prob}) with the curvilinear search submodule is described in Algorithm \ref{alg:stiefel}, which is used as a submodule in part of our learning algorithm to solve the MMF in (\ref{eq:mmf-opt}). The algorithm can be trivially extended to solve problems with multiple variables and constraints.

\begin{algorithm}
\caption{Stiefel manifold gradient descent algorithm} \label{alg:stiefel}
\begin{algorithmic}[1]
\State Given $0 < \rho_1 < \rho_2 < 1$ and $\epsilon > 0$.
\State Given an initial point $\mX^{(0)} \in \mathcal{V}_p(\mathbb{R}^n)$.
\State $t \gets 0$
\While{true}
	\State $\mG \gets \big(\frac{\partial \mathcal{F}(\mX^{(t)})}{\partial \mX^{(t)}_{i, j}}\big)$ \Comment{Compute the gradient of $\mathcal{F}$ w.r.t $\mX$ elemense-wise}
	\State $\mA \gets \mG{\mX^{(t)}}^T - \mX^{(t)}\mG^T$ \Comment{See Lemma 2, 3}
	\State Initialize $\tau$ to a non-zero value. \Comment{Curvilinear search for the optimal step size}
	\While{(\ref{eq:condition-1}) and (\ref{eq:condition-2}) are \textbf{not} satisfied} \Comment{Armijo-Wolfe conditions}
		\State $\tau \gets \frac{\tau}{2}$ \Comment{Reduce the step size by half}
	\EndWhile
	\State $\mX^{(t + 1)} \gets \mY(\tau)$ \Comment{Update by the Cayley transformation}
	\If{$\|\nabla \mathcal{F}(\mX^{(t + 1)})\| \leq \epsilon$} \Comment{Stopping check. See Lemma 2.}
		\State \textbf{STOP}
	\Else
		\State $t \gets t + 1$
	\EndIf
\EndWhile
\end{algorithmic}
\end{algorithm}
\end{appendices}

\end{document}